\newtheorem{theorem}{Theorem}
\newtheorem{definition}{Definition}
\newtheorem{assumption}{Assumption}
\newtheorem{proposition}{Proposition}
\def\Figref#1{Figure~\ref{#1}}
\def\Secref#1{Section~\ref{#1}}
\def\eqref#1{equation~\ref{#1}}
\def\Eqref#1{Equation~\ref{#1}}
\def\1{\bm{1}}
\def\rvu{{\mathbf{i}}}
\def\rvo{{\mathbf{o}}}
\def\rvs{{\mathbf{s}}}
\def\rvu{{\mathbf{u}}}
\def\rvx{{\mathbf{x}}}
\def\rvy{{\mathbf{y}}}
\def\ve{{\bm{e}}}
\def\vo{{\bm{o}}}
\def\vs{{\bm{s}}}
\def\vu{{\bm{u}}}
\DeclareMathAlphabet{\mathsfit}{\encodingdefault}{\sfdefault}{m}{sl}
\SetMathAlphabet{\mathsfit}{bold}{\encodingdefault}{\sfdefault}{bx}{n}
\definecolor{lightgrey}{rgb}{0.43,0.43,0.43}
\icmltitlerunning{Causal World Models by Unsupervised Deconfounding of Physical Dynamics}
\begin{document}

\twocolumn[
\icmltitle{Causal World Models by Unsupervised Deconfounding of Physical Dynamics}



\icmlsetsymbol{equal}{*}

\begin{icmlauthorlist}
\icmlauthor{Minne Li}{equal,ucl}
\icmlauthor{Mengyue Yang}{equal,ucl}
\icmlauthor{Furui Liu}{huawei}
\icmlauthor{Xu Chen}{ucl}
\icmlauthor{Zhitang Chen}{huawei}
\icmlauthor{Jun Wang}{ucl}
\end{icmlauthorlist}
\author{%
  Minne Li\thanks{Equal contributions.}~~$^\ddagger$, Mengyue Yang\samethanks~~$^\dagger$, Furui Liu$^\dagger$, Xu Chen$^\ddagger$, Zhitang Chen$^\dagger$, and Jun Wang$^\ddagger$\\
  $^\ddagger$University College London,  $^\dagger$Huawei Noah’s Ark Lab
}

\icmlaffiliation{ucl}{University College London, London, United Kingdom}
\icmlaffiliation{huawei}{Huawei Noah’s Ark Lab, Shenzhen, China}

\icmlcorrespondingauthor{Furui Liu}{liufurui2@huawei.com}

\icmlkeywords{Machine Learning, ICML}

\vskip 0.3in
]



\printAffiliationsAndNotice{\icmlEqualContribution} 

\begin{abstract}
The capability of imagining internally with a mental model of the world is vitally important for human cognition. If a machine intelligent agent can learn a world model to create a "dream" environment, it can then internally ask what-if questions -- simulate the alternative futures that haven't been experienced in the past yet -- and make optimal decisions accordingly. Existing world models are established typically by learning spatio-temporal regularities embedded from the past sensory signal without taking into account confounding factors that influence state transition dynamics. As such, they fail to answer the critical counterfactual questions about "what would have happened" if a certain action policy was taken. In this paper, we propose Causal World Models (CWMs) that allow unsupervised modeling of relationships between the intervened observations and the alternative futures by learning an estimator of the latent confounding factors. We empirically evaluate our method and demonstrate its effectiveness in a variety of physical reasoning environments. Specifically, we show reductions in sample complexity for reinforcement learning tasks and improvements in counterfactual physical reasoning.
\end{abstract}

\section{Introduction}
\label{sec:intro}
Human-level intelligence relies on building up the capability of simulating the physical world in order to create human-like thinking, reasoning, and decision making abilities~\cite{Lake1332,lake2017building,spelke2007core}.
This mechanism has served as a core motivation behind several recent works of learning world models (WMs) that aim at predicting future sensory data given the agent's current motor actions~\cite{greff2017neural,Kipf2020Contrastive,van2018relational,xu2019unsupervised}.
One of the ultimate goals of WMs is creating the \emph{dream},
where the agent can internally simulate the alternative futures not encountered in the real world~\cite{ha2018worldmodels}.
This, however, requires WMs to predict the reasonable outcome of various types of interventions to the observation.
In this paper,
we explore building WMs capable of creating the dream environment to predict the counterfacts that \emph{would have happened}.

Conventional world models usually aim at learning spatio-temporal regularities from the past data and thereby predict future frames from one or several past frame(s).
These models typically employ standard IID\footnote{Independent and identically distributed.} function learning techniques without considering the causal effect of the interventions~\cite{DBLP:conf/uai/BalkeP95}.
However,
as an agent interacts with the environment and thus influences the statistics of the gathered data over time,
the IID assumption is violated~\cite{schlkopf2019causality}.
By looking for invariances from observational data has been shown to help identify robust components and causal features of the environment~\cite{peters2017elements}.
For example in the physical dynamics systems,
these features could be the underneath confounding factors (or confounders) affecting the environment across temporality such as inertia, velocity, gravity or friction.
Accurate estimation of these factors enables model-based RL agents to generalize to other parts of the state space and thus being more robust.
Correcting for the confounding effect could also influence the policy to be optimized~\cite{lu2018deconfounding}.

In this paper, we introduce the Causal World Models (CWMs) to predict the effect of interventions that \emph{would have happened},
i.e.,
we wish to predict consequent observation trajectories if we had changed the initial observation by performing an external intervention,
which is defined as an object-level observable change applied to the initial observation.
CWMs aim at modeling relationships between the variable on which the intervention is performed and the variable whose alternative future should be predicted.
Our model starts from learning a set of abstract state variables for each object in a particular observation and model the transition using graph neural networks~\cite{DBLP:journals/corr/abs-1806-01261,kipf2017semi,DBLP:journals/corr/LiTBZ15,10.1109/TNN.2008.2005605} that operate on latent abstract representations.
We then develop CWMs by estimating the latent representation of the confounders,
i.e., a set of static but visually unobservable quantities that could otherwise affect the observation.
In our method, the deconfounding process could be understood as an extension on conventional World Models.
Deconfounding methods help to build time-invariant function beyond transition functions, which gives a more precise description of the environment by preserving global information,
such as gravity acceleration in physical systems,
from time variations.
The alternative future is then predicted given the altered past and the estimation of confounders.
To further reduce the intrinsic bias owing to the inadequate counterfactual records,
we also propose a counterfactual risk minimization method from historical observations for CWMs.
We empirically evaluate our method and demonstrate its effectiveness in CoPhy~\cite{Baradel2020CoPhy} (a counterfactual physics benchmark suite) and PHYRE~\cite{bakhtin2019phyre} (a physical reasoning benchmark with reinforcement learning tasks).
Specifically, we show reductions in sample complexity for reinforcement learning and improvements in counterfactual physical reasoning.

\section{Related Work}
The task in observational discovery of causal effects in physical reality is usually concerned with predicting the effect of various types of interventions,
including physical laws underneath the environment~\cite{NIPS2016_6418,DBLP:conf/iclr/0003U0T17,DBLP:conf/nips/WuYLFT15},
the actions executed by the agent itself~\cite{levine2014learning,DBLP:conf/nips/LiW0B19,wahlstrom2015pixels,watter2015embed},
and the outcome of other agents' decision in multi-agent systems~\cite{pmlr-v48-he16,DBLP:conf/ijcai/TianWGPZW19}.
While causal reasoning has gained mainstream attention in the machine learning field recently~\cite{Lopez-Paz_2017_ICLR,mooij2016distinguishing,schlkopf2019causality},
the current literature mostly focuses on discovering the causal effect between variables in static environments~\cite{10.5555/3020847.3020867,kocaoglu2018causalgan,lopez2017discovering}.
Extensive research has also been conducted on visual reasoning in static environments~\cite{10.5555/3157382.3157459,arad2018compositional,johnson2017inferring,mao2018the,santoro2017simple}.
Although~\citet{lu2018deconfounding} has considered deconfounding in the reinforcement learning scenario,
their assumption only covers the confounding between the observations, actions and rewards,
thus failing to predict counterfactual scenarios by deconfounding the state transitions.
By contrast, 
our proposed CWMs take into account the confounding factors between state transitions and can therefore internally predict the alternative futures not encountered in the past.

Understanding intuitive physics from visual perception has attracted considerable attention in machine learning and reinforcement learning society~\cite{Kubricht2017,10.5555/3045390.3045437,DBLP:conf/nips/WuYLFT15,NIPS2017_6620,10.1007/978-3-030-01252-6_20,Sun2019RelationalAF}.
Many of the existing models require supervised modeling of the object definition,
by either comparing the activation spectrum generated from neural network filters with existing types~\cite{garnelo2016towards}
or leveraging the bounding boxes generated by standard object detection algorithms in computer vision~\cite{keramati2018strategic}.
Although~\cite{zambaldi2018deep} have used the relational mechanism to discover and reason about relevant entities,
their model needs additional supervision to label entities with location information.
On the contrary, CWMs use a fully unsupervised manner to extract object abstractions.
To the best of our knowledge,
CoPhyNet~\cite{Baradel2020CoPhy} is the only work considering counterfactual scenario in learning physical dynamics but used direct supervision of the object positions,
thus only performing counterfactual forecasting in low-dimensional settings.
Nevertheless,
we still benefit from their proposed evaluation benchmark and demonstrate the effectiveness of our fully unsupervised causal world models in \Secref{sec:experiment}.

A variety of object-based World Models (WMs) have been proposed~\cite{greff2017neural,van2018relational,watters2019cobra} thanks to recent works studying the problem of object discovery from visual data~\cite{Burgess2019MONetUS,DBLP:conf/iclr/0003U0T17,Engelcke2020GENESIS,DBLP:conf/icml/GreffKKWBZMBL19,janner2018reasoning,NIPS2018_8079,10.1007/978-3-030-01252-6_20,Sun2019RelationalAF,NIPS2017_7040,xu2019unsupervised,DBLP:conf/uai/ZhengL0T18}.
By exploiting WMs' ability to \emph{think and plan ahead}~\cite{ha2018worldmodels,NIPS2018_8187},
model-based reinforcement learning algorithms have been shown to be more effective than model-free alternatives in certain tasks~\cite{Gu:2016:CDQ:3045390.3045688,pmlr-v80-igl18a,watter2015embed,levine2016end}.
However,
these models are often bottlenecked by the \emph{credit-assignment} problem:
they typically optimize a prediction or reconstruction objective function in pixel space and thereby could ignore visually small but informative features for predicting the future (such as, for instance, a bullet in an Atari game~\cite{Kaiser2020Model}).
On the contrary,
CWMs learn a set of object-centric abstract state variables and model the transition using graph neural networks~\cite{NIPS2016_6418,DBLP:journals/corr/abs-1806-01261,pmlr-v80-kipf18a,kipf2017semi,DBLP:journals/corr/LiTBZ15,10.1109/TNN.2008.2005605,wang2018nervenet,pmlr-v80-sanchez-gonzalez18a} by optimizing an energy-based hinge loss~\cite{LeCun06atutorial} in the latent space directly.

\begin{figure*}[t!]
\centering
\begin{subfigure}{0.3\textwidth}
\centering
\includegraphics[height=0.4\textwidth]{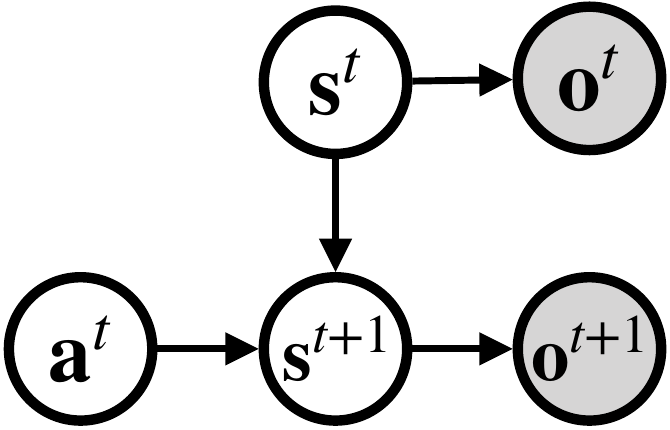}
\caption{}
\label{subfig:graph_noconf}
\end{subfigure}
\quad
\begin{subfigure}{0.3\textwidth}
\centering
\includegraphics[height=0.4\textwidth]{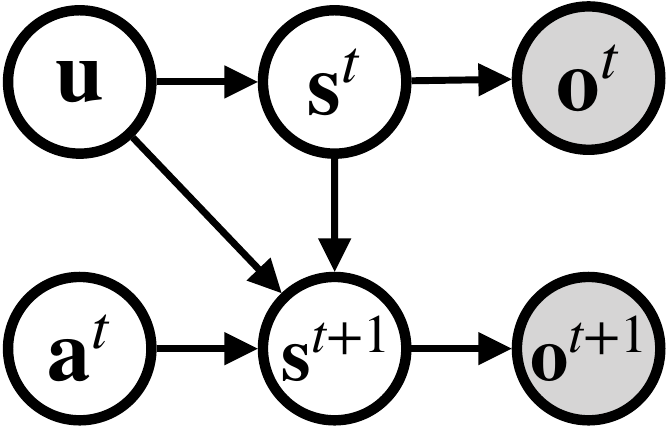}
\caption{}
\label{subfig:graph_conf}
\end{subfigure}
\quad
\begin{subfigure}{0.3\textwidth}
\centering
\includegraphics[height=0.4\textwidth]{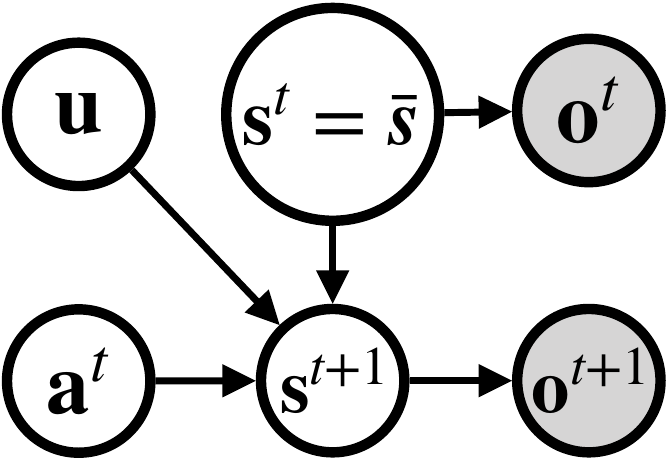}
\caption{}
\label{subfig:graph_do_s}
\end{subfigure}
\caption{
The graphical model of (a) POMDPs used by conventional WMs, (b) causal POMDPs used by CWMs, and (c) CWMs after intervention (\textbf{do}-operation).
}
\label{fig:graph}
\end{figure*}

Our proposed model adopts counterfactual learning to mitigate the problem of biased historical data.
Counterfactual Risk Minimization (CRM) \cite{swaminathan2015batch, swaminathan2015counterfactual}
is an instance of causal inference closely related to off-policy evaluation~\cite{kallus2018policy}.
Because the distribution of factual and counterfactual are not fully consistent,
causal models built upon historical data will produce bias.
Existing work has therefore considered unbiased learning by inverse propensity scores (IPS) methods~\cite{rosenbaum1983central}.
Most existing literature focuses on the scenarios of binary or limited discrete interventions~\cite{athey2017efficient},
including the popular \emph{doubly robust} frameworks ~\cite{dr1,dr3,dr2}.
Instead of measuring the causal effect of the  intervention in a discrete event space, 
our work focuses on the intervention distribution in continuous space.
In order to solve the problems of zero propensity scores on continuous intervention setting,
\citet{kallus2018policy} used kernel functions to achieve a smooth policy learning process, while \citet{swaminathan2015counterfactual} regularized the empirical risk via variance penalization.
Further discussion on this topic can be found in~\cite{louizos2017causal,DBLP:journals/corr/abs-1909-05299,swaminathan2015batch,swaminathan2015self}. 
Staying different from above approaches, the propensity weight in our method is not decided by the environments state,
but decided by the historical sampling strategies.

\section{Causal World Models}
Our goal is to build a world model capable of creating the dream environment by predicting the effect of interventions that \emph{would have happened}.
We start by introducing the notation and the problem definition of causality in learning physical dynamics.
Then, we introduce the general framework for learning object-oriented state abstractions and estimating the confounders.
Lastly, we introduce the usage of \emph{doubly robust functions} - a key technique in robust statistics and efficiency theory~\cite{dr3} - 
to enhance the sample efficiency and reduce the bias induced by the historical sample policy.

\subsection{Preliminary: POMDPs}
\label{subsec:pomdp}
As shown in~\Figref{subfig:graph_noconf},
conventional WMs typically consider the environment as a partially observable Markov Decision Process (POMDP) represented by the tuple 
$\Gamma={\langle} \mathcal{S}, \mathcal{A}, \mathcal{O}, \mathcal{U}, \mathcal{T}, \mathcal{R}, {T} {\rangle}$,
where $\mathcal{S}, \mathcal{A}, \mathcal{O}, {T}$
are the state space, the action space, the observation space, and the horizon, respectively.
For an agent taking actions in this environment, we consider the variable representing its received observation at time step $t$ be designated by $\rvo^t$,
the real-world observed values by 
$\vo^t \in \mathcal{O} \equiv \mathbb{R}^{D}$.
At each time step $t$, we denote as $\rvs^t$ the variable representing the abstract state of the world and $\vs^t \in \mathcal{S} \equiv \mathbb{R}^d$ the real-world value of this variable.
$\rvs^t$ can be typically regarded as the latent static component of the world to render the observation,
e.g.,
the shape, size and color of the objects.
The observation $\vo^{t}$ are provided by the environment following the observation distribution
$\mathcal{U}(\rvo^{t}|\rvs^{t}): \mathcal{S} \rightarrow \mathcal{O}$.
When the environment receives an action $a^{t}\in\mathcal{A}$ executed by the agent, 
it moves to a new state $\vs^{t+1}$ following the transition distribution
$\mathcal{T}(\rvs^{t+1}|\rvs^t,a^t): \mathcal{S} \times \mathcal{A} \rightarrow \mathcal{S}$ and returns a reward $r^t$ according to $\mathcal{R}(r^{t}|\rvs^{t},a^{t}): \mathcal{S} \times \mathcal{A} \rightarrow \mathbb{R}$.
Conventional WMs focus on estimating the distribution of $\rvs^{t+1}$ given the \emph{observed} value of $\rvs^{t}$ and $a^t$
\footnote{For the ease of understanding we only illustrate the learning of the transition function,
although the same analysis applies to the reward function.}.
This estimation will give us the observational conditional $p(\rvs^{t+1}|\rvs^{t},a^t)$.
We will see below that the observational conditional is generally biased in the real world where confounding factors widely exist.

\subsection{Causal POMDPs}
\label{subsec:causal_pomdp}
In this paper,
we extend the above typical setting of a POMDP by considering the existence of confounding factors $\rvu$~\cite{10.5555/2074394.2074401},
which are time-invariant hidden variables that influence both the intervention $\rvs^{t}$ and the outcome $\rvs^{t+1}$ as shown in~\Figref{subfig:graph_conf}.
In physical systems,
we regard $\rvu$ as a set of static but visually unobservable quantities such as object masses, friction coefficients, direction and magnitude of gravitational forces that cannot be uniquely estimated from a single time step.
Building world models upon causal POMDPs enables us to control the state $\rvs^{t}$ to create the \emph{dream} environment and facilitate imagination.
In other words,
we intend to know what the scenario $\rvs^{t+1}$ world have been if we \emph{set} the world state $\rvs^{t}$ to a specific value $\bar{\vs}$.
We adopt the \textbf{do}-operator  $\textbf{do}(\rvs^{t}=\bar{\vs})$ from causal reasoning~\cite{10.5555/1642718},
and arrives at the interventional conditional $p(\rvs^{t+1}|\textbf{do}(\rvs^{t}=\bar{\vs}),a^t)$.
In most real-world cases,
the observational conditional and the interventional conditional are different because of the existence of confounding factors $\rvu$.
To illustrate the impact, we provide simple calculations for 
the observational conditional
\begin{align*}
p(\rvs^{t+1}|\rvs^{t},a^t) 
&= \int_{\mathcal{\rvu}} p(\rvs^{t+1}|\rvu, \rvs^{t},a^t) p(\rvu | \rvs^{t},a^t) d \rvu \\
&= \int_{\mathcal{\rvu}} p(\rvs^{t+1}|\rvu, \rvs^{t},a^t) \frac{p(\rvs^{t},a^t | \rvu)}{p(\rvs^{t},a^t)} p(\rvu)d \rvu,
\end{align*}
and the interventional conditional
\begin{equation*}
p(\rvs^{t+1}|\textbf{do}(\rvs^{t}=\bar{\vs}),a^t) 
= \int_{\mathcal{\rvu}} p(\rvs^{t+1}|\rvu, \rvs^{t}=\bar{\vs},a^t) p(\rvu) d \rvu 
.
\end{equation*}
Clearly the above two are not the same  due to the influence of the confounders $\rvu$: $p(\rvs^{t},a^t | \rvu)$. This result is also termed as the Simpson's paradox~\cite{simpson1951interpretation}, a classical example for the existence of confounding in medical treatment scenario (see Appendix~\ref{appdx:pearl_example} for details).
The above observations therefore encourage us to build the world model following the causal POMDPs in~\Figref{subfig:graph_conf},
which enables the world model to disentangle the true effect of an intervention on the observation~\cite{louizos2017causal}.

\subsection{Learning Causal World Models}
\label{subsec:cwms_learning}

To learn the transition function of the dream world,
we apply the \emph{do-intervention}~\cite{10.5555/1642718} $\textbf{do}(\rvs^t=\bar{\vs}^t)$ on the abstract state variable as shown in~\Figref{subfig:graph_do_s},
where $\bar{\vs}^t \in \mathcal{S} \equiv \mathbb{R}^d$ is the counterfactual value in the dream environment.
The intervened abstract state is then rendered as an object-level observable change applied to $\vo^t$ (such as, for instance, object displacement or removal) by the conditional observation distribution $\mathcal{U}(\rvo^{t} =\bar{\vo}^t| \textbf{do}(\rvs^t=\bar{\vs}^t))$,
where $\bar{\vo}^t \in \mathcal{O} \equiv \mathbb{R}^{D}$ represents the value of the counterfactual observation.

Learning the causal world models aims at answering:
\begin{quote}
\vskip -0.05in
Given that we have observed $\rvo^{t:T}=\vo^{t:T}$ in the real world,
what is the probability that $\rvo^{t + 1:T}$ would have been $\bar{\vo}^{t + 1:T'}$ if $\rvo^t$ were $\bar{\vo}^t$ in the dream world?
\end{quote}
\vskip -0.05in
Particularly,
having observed the tuple $(\vo^{t:T}, \bar{\vo}^{t})$,
we wish to predict positions and poses of all objects in the scene at time $t={T}'$.
We consider an \emph{off-policy} setting for the model training, where we operate solely on a buffer of offline experience obtained from,
such as, an exploration policy.
In this paper,
we focus on building an internal \emph{dream} environment to simulate the future not encountered in the past with any given dream policy $p(a^t|\rvs^t)$.
We therefore drop the action variable from the notation in the remainder of this paper by 
$
     \int_{a^t}\mathcal{T}(\rvs^{t+1}|\rvs^t,a^t) p(a^t|\rvs^t) d a^t = \int_{a^t}\mathcal{T}(\rvs^{t+1},a^t|\rvs^t) d a^t =  \mathcal{T}(\rvs^{t+1}|\rvs^t).
$

Instead of recovering the individual treatment effect (ITE),
which measures the causal effect when the intervention happens in a discrete event space~\cite{DBLP:conf/icml/AlaaS19,louizos2017causal,DBLP:journals/corr/abs-1909-05299}, 
we focus on the identification of the intervention distributions
$p(\rvo^{t + 1:{T'}} | \textbf{do}(\rvs^t = \bar{\vs}^t))$ and $p(\rvo^{t} | \textbf{do}(\rvs^t = \bar{\vs}^t))$ in the continuous space.

\begin{definition}
We say a variable $\rvu$ factorizes the distribution of variable $\rvs^t$ iff:
$
p(\rvs^t)=\int_{\mathcal{\rvu}} p\left(\rvs^t | \rvu=\vu\right) p(\rvu=\vu) d \rvu.
$
\end{definition} 
\begin{assumption}
There exists an unobserved variable $\rvu$ such that 
(i) $\rvu$ blocks all backdoor path\footnote{In a directed acyclic graph (DAG), a path that connects $X$ to $Y$ is a backdoor path\ from $X$ to $Y$ if it has an arrowhead pointing to $X$,
e.g. $X\leftarrow Z \rightarrow Y.$}
from $\rvs^t$ to $\rvs^{t + 1}$ and (ii) there exist no backdoor path from the abstract state $\rvs^t$ to the observation $\rvo^t$.
\end{assumption}
\begin{theorem}Under Assumption 1, if there exists an estimator of $\rvu$,
$\hat{\rvu}(\rvs^t)$, such that
$
\hat{\rvu}(\rvs^t) \stackrel{a \cdot s}{\longrightarrow} \rvu,
$
then the intervention distributions $p(\rvo^{t + 1:{T'}} | \textbf{do}(\rvs^t = \bar{\vs}^t))$ and $p(\rvo^{t} | \textbf{do}(\rvs^t = \bar{\vs}^t))$ are identifiable.
\end{theorem}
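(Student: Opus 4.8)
The plan is to establish identifiability in Pearl's sense, i.e.\ to exhibit both interventional distributions as explicit functionals of the observational distribution $p(\rvo^{t:T},\rvs^{t:T})$ together with the causal graph of \Figref{subfig:graph_conf}. First I would decompose the target trajectory distribution along the Markov structure of the causal POMDP, separating the emission of observations from the propagation of states:
\begin{equation*}
p(\rvo^{t+1:T'} \mid \textbf{do}(\rvs^t = \bar{\vs}^t)) = \int p(\rvo^{t+1:T'} \mid \rvs^{t+1:T'})\, p(\rvs^{t+1:T'} \mid \textbf{do}(\rvs^t = \bar{\vs}^t))\, d\rvs^{t+1:T'}.
\end{equation*}
This splits the problem into identifying the emission kernel $p(\rvo^\tau \mid \rvs^\tau)$ and the intervened state trajectory $p(\rvs^{t+1:T'} \mid \textbf{do}(\rvs^t = \bar{\vs}^t))$.

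For the emission part I would invoke Assumption 1(ii): since no backdoor path connects $\rvs^\tau$ to $\rvo^\tau$, the empty set satisfies the backdoor criterion, so $p(\rvo^\tau \mid \textbf{do}(\rvs^\tau)) = p(\rvo^\tau \mid \rvs^\tau) = \mathcal{U}(\rvo^\tau \mid \rvs^\tau)$, which is observational and hence identifiable; the same argument immediately settles the single-step target $p(\rvo^t \mid \textbf{do}(\rvs^t = \bar{\vs}^t)) = \mathcal{U}(\rvo^t \mid \rvs^t = \bar{\vs}^t)$. For the state trajectory I would use Assumption 1(i): because $\rvu$ blocks every backdoor path from $\rvs^t$ to $\rvs^{t+1}$ and is time-invariant, backdoor adjustment together with the Markov factorization gives
\begin{equation*}
p(\rvs^{t+1:T'} \mid \textbf{do}(\rvs^t = \bar{\vs}^t)) = \int_{\rvu} \Big[\textstyle\prod_{\tau=t}^{T'-1} p(\rvs^{\tau+1}\mid \rvs^{\tau},\rvu)\Big]\, p(\rvu)\, d\rvu, \qquad \rvs^t = \bar{\vs}^t .
\end{equation*}

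The main obstacle is that $\rvu$ is latent, so a priori the right-hand side is not a functional of the observational distribution --- indeed, with an unrestricted hidden confounder the effect would be non-identifiable. This is exactly where the consistent estimator enters: since $\hat{\rvu}(\rvs^t)\stackrel{a.s.}{\longrightarrow}\rvu$, the confounder is, in the limit, a measurable deterministic function of $\rvs^t$ and thus behaves like an observed variable. I would then argue (i) that the marginal $p(\rvu)$ is recovered as the pushforward of $p(\rvs^t)$ under $\hat{\rvu}$, and (ii) that each conditional transition $p(\rvs^{\tau+1}\mid\rvs^\tau,\rvu)$ is identified from the observational triples $(\rvs^\tau,\hat{\rvu}(\rvs^\tau),\rvs^{\tau+1})$. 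Plugging these into the adjustment formula and passing to the limit --- via the almost-sure convergence together with dominated convergence and the continuous mapping theorem for the resulting integrals --- reproduces the true interventional trajectory law, which combined with the identified emission kernel yields the claim. The delicate point I would handle with care is that the adjustment formula evaluates $p(\rvs^{\tau+1}\mid\rvs^\tau,\rvu)$ off the observational ``diagonal'' $\rvu=\hat{\rvu}(\rvs^\tau)$; I would control this either through continuity of the learned transition kernel in $(\rvs^\tau,\rvu)$ or by restricting to the region on which the estimator converges uniformly, so that the plug-in integral provably converges to the true interventional distribution.
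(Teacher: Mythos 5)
Your proposal follows essentially the same route as the paper's proof: back-door adjustment over the confounder $\rvu$ (Assumption~1(i) for the state transition, 1(ii) for the emission $p(\rvo^{t}\mid\textbf{do}(\rvs^t=\bar{\vs}^t))=p(\rvo^{t}\mid\rvs^t=\bar{\vs}^t)$), with the latent $\rvu$ replaced by the almost-surely consistent estimator $\hat{\rvu}(\rvs^t)$ so that every term in the adjustment formula becomes a functional of the observed distributions. You are in fact more careful than the paper, whose proof is the one-line adjustment identity of \Eqref{eq:intervention_identifiability}; in particular, the off-diagonal/positivity issue you flag at the end (that $p(\rvs^{\tau+1}\mid\rvs^\tau,\rvu)$ is observationally pinned down only at $\rvu=\hat{\rvu}(\rvs^\tau)$ once the estimator is consistent) is a genuine subtlety the paper does not address.
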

\begin{proof}
By applying \textbf{do}-calculus to the causal graph in~\Figref{subfig:graph_conf} (as shown in~\Figref{subfig:graph_do_s}),
referred to back-door criterion, the intervention distributions are then identified by
\begin{align}
    &p(\rvo^{t} | \textbf{do}(\rvs^t = \bar{\vs}^t)) = p(\rvo^{t} | \rvs^t = \bar{\vs}^t),\nonumber \\
    &p(\rvo^{t + 1:{T'}} | \textbf{do}(\rvs^t = \bar{\vs}^t))
    = \int_{\mathcal{\rvu}} p(\rvo^{t + 1:{T'}} | \rvs^{t + 1:{T'}}) \nonumber \\
    &\quad\times p(\rvs^{t + 1:{T'}} | \rvs^t = \bar{\vs}^t, \hat{\rvu}(\rvs^t)=\vu) p(\hat{\rvu}(\rvs^t)=\vu) d \rvu
\label{eq:intervention_identifiability}
\end{align}
This completes the proof since the quantities in the final expression of~\Eqref{eq:intervention_identifiability} can be identified from the observed distribution $p(\rvo^{t:{T}})$ and $p(\bar{\rvo}^t)$ and the estimator $\hat{\rvu}(\rvs^t)$.
\end{proof}
We regard $\rvu$ as the \emph{confounder} described in~\Secref{sec:intro}.
In this paper,
we consider $\rvu$ as a set of static but visually unobservable quantities such as object masses, friction coefficients, direction and magnitude of gravitational forces.
We develop CWMs by learning $\hat{\rvu}$ to estimate the latent representation of $\rvu$.
The dream rollout $\bar{\vo}^{t+1:{T'}}$ is then predicted given $\bar{\vo}^t$ and $\hat{\rvu}$ following~\Eqref{eq:intervention_identifiability}.

\subsection{Model Design} 
\label{subsec:cwm_design}
\begin{figure*}[t!]
\centering
\includegraphics[width=\textwidth]{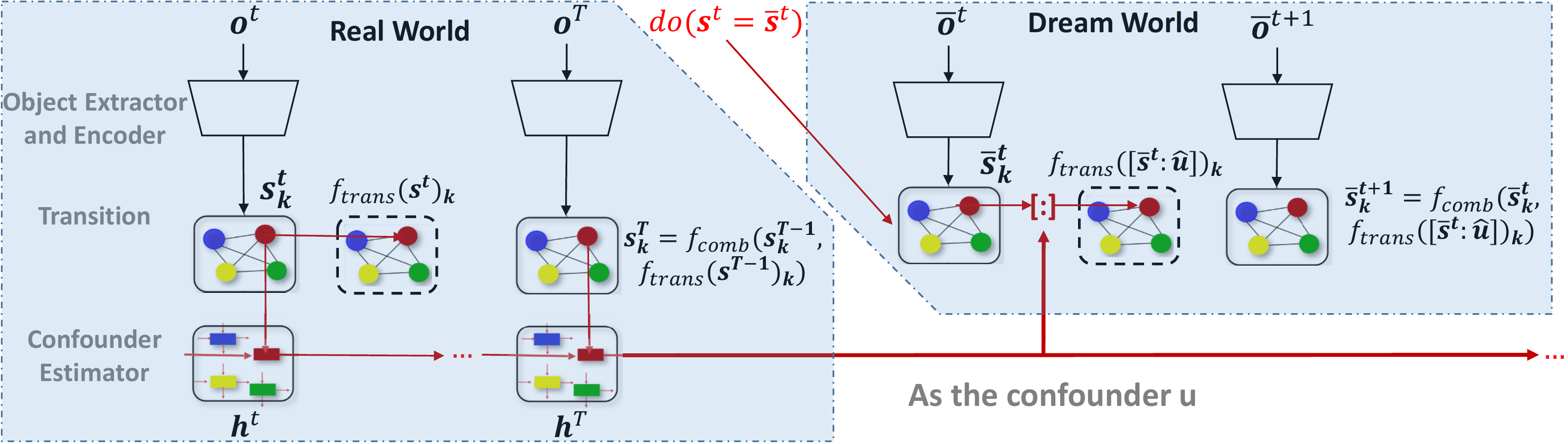}
\caption{An illustration of the proposed Causal World Models.
}
\label{fig:cwm}
\end{figure*}
Our starting point is learning a set of object-oriented state abstractions and models the transition using graph neural networks,
which facilitates capturing the structural property of the physical system and tackles the credit-assignment problem faced by most conventional generative WMs.
To further improve the discrepancy between state abstractions,
the training of CWM is carried out with an energy-based hinge loss~\cite{LeCun06atutorial},
which scores positive against negative experiences in the form of transition dynamics and propose to learn the transition dynamics in the latent space directly.
Lastly, we introduce our confounder estimator using a set of recurrent neural networks.

\paragraph{Object Extractor and Encoder}
\label{subsubsec:obj_enc}
To avoid the credit-assignment problem introduced by generative-based world models,
which optimize a prediction or reconstruction objective in pixel space and thereby could ignore visually small but informative features for predicting the future (such as, for instance,  a bullet in an Atari game~\cite{Kaiser2020Model}),
CWM learns a set of abstract state values $\rvs^t \in \mathcal{S} \equiv \mathbb{R}^D$ for each object in $\vo^t \in \mathcal{O}$.
Formally, we have an \emph{encoder}
$f_{\text{enc}}: \mathcal{O}\rightarrow K \times \mathcal{S}$
which maps observation to $K$ abstract state representations.
The choice of $K$ is a hyperparameter and we conduct ablation study on its influence in Appendix~\ref{appdx:hyperparam}.
A simple implementation of $f_{\text{enc}}$ consists of two modules:
1) a CNN taking $\vo^t$ as input with $K$ feature maps in its last layer corresponding to $K$ object slots, respectively;
and 2) an MLP taking each feature map as input and producing the corresponding abstract state representation $\vs^t_k$ with $k \in \{1,2,\dots,K\}$.
Note that choosing other advanced unsupervised object representation learning methods for $f_{\text{enc}}$ is a straightforward extension.
For example,
an alternative choice could be the Transporter~\cite{NIPS2019_9256},
which is used for learning concise geometric object representations in terms of image-space coordinates in a fully unsupervised manner.

\paragraph{Transition Estimator} 
We formulate the \emph{transition model} of CWM as $f_{\text{trans}}: K \times \mathcal{S}\rightarrow K \times \mathcal{S}$ operating on abstract state representations $\vs^t$.
In this paper,
we implement $f_{\text{trans}}$ as a graph neural network~\cite{DBLP:journals/corr/abs-1806-01261,kipf2017semi,DBLP:journals/corr/LiTBZ15,10.1109/TNN.2008.2005605},
which allows us to model pairwise interactions between object states while being invariant to the order in which objects are represented.
The transition model $f_{\text{trans}}$ takes as input the tuple of abstract object representations $\vs^t = (\vs^t_1, \dots, \vs^t_K)$ and predicts updates $\Delta \vs^t=(\Delta \vs^t_1,\dots,\Delta \vs^t_K)$.
which are used to obtain the next abstract state representations via a combination function $f_{\text{comb}}(\cdot,\cdot)$.
In this work we simply choose $f_{\text{comb}}$ as an addition function,
i.e.,
$f_{\text{comb}}\left(\vs^t, f_{\text{trans}}(\vs^t)\right) = \vs^t + \Delta \vs^t$.
Alternatively, one could model the combination using other graph embedding methods~\cite{DBLP:conf/icml/NickelTK11,trouillon2016complex}.
The graph neural network consists of node update functions $f_{\text{trans-node}}$ and edge update functions $f_{\text{trans-edge}}$ with shared parameters across all nodes and edges.
These functions are implemented as MLPs and we choose the message passing updates as
$\Delta \vs^t_j = f_{\text{trans-node}}([\vs^t_j,\textstyle\sum_{i\neq j} \ve_{i, j}^t])$,
where
$\ve_{i, j}^t = f_{\text{trans-edge}}([\vs^t_i,\vs^t_j])$
is an intermediate representation of the interaction between nodes $i$ and $j$.
We denote the output of the transition model for the $k$-th object as $\Delta \vs^t_k = f_{\text{trans}}(\vs^t)_k$ in the following.

\paragraph{Training Objective}
To further improve the discrepancy between state abstractions,
the training of CWM is carried out with an energy-based hinge loss~\cite{LeCun06atutorial},
which scores positive and negative samples in a different direction and has been widely used in the field of graph representation learning~\cite{bordes2013translating,grover2016node2vec,perozzi2014deepwalk,schlichtkrull2018modeling,velickovic2018graph}.
We define the energy of two consecutive state variables and the energy of the negative sample as 
\begin{align}
\begin{split}
\mathcal{H} &= \frac{1}{K}\sum\nolimits_{k=1}^K d(f_{\text{comb}}\left(\vs^t_k, f_{\text{trans}}(\vs^t)_k\right), \vs^{t+1}_k) \text{ ,}
\\
\tilde{\mathcal{H}} &= \frac{1}{K}\sum\nolimits_{k=1}^K d(\tilde{\vs}^t_k, \vs^{t+1}_k),
\end{split}
\label{eq:energy}
\end{align}
where $d(\cdot,\cdot)$ denotes the squared Euclidean distance
and $\tilde{\vs}^t$ denotes a corrupted abstract state encoded by $\tilde{\vs}^t = f_{\text{enc}}(\tilde{\vo}^t)$ using a random sample $\tilde{\vo}_t$ from the experience buffer.
The objective of CWM then takes the following energy-based hinge loss form as:
\begin{equation}
\mathcal{L} = \mathcal{H} + \max(0, \gamma - \tilde{\mathcal{H}})\, ,
\label{eq:obj_function}
\end{equation}
where margin $\gamma$ is a hyperparameter.
The overall loss is to be understood as an expectation of the above over samples from the experience buffer.

\paragraph{Deconfounding}
We develop CWM by learning an estimator $\hat{\rvu}$ to approximate the latent representation of the confounders $\rvu_k$ for each object $k$.
The estimator $\hat{\rvu}$ is trained end-to-end by optimizing the counterfactual prediction loss through a recurrent neural network,
which takes as input the sequence of abstract state variables $\vs^{0:{T}}_k$.
Concretely, we run a dedicated RNN with GRU~\cite{conf/emnlp/ChoMGBBSB14} $f_\phi$ for each object trajectory $\vs^{0:{T}}_k$ and keep the last hidden state $h^{T}_k=f_\phi(\vs^{0:{T}}_k)$ as the estimate of the latent representation of the confounders,
i.e., $\hat{\rvu}_k \triangleq h^{T}_k$.
We follow the convention of sharing the parameters of $f_\phi$ over $K$ objects~\cite{Baradel2020CoPhy},
which makes the model invariant to the number of objects.
The confounders estimation is then fed into the dream world model by concatinating (shown as \textbf{[:]} in~\Figref{fig:cwm}) with $\vs^t$ before the transition estimator to predict the alternative futures after \textbf{do}-intervention.

\subsection{Doubly Robust Learning from Historical Observations}
\label{subsec:crm}

In real-world scenarios,
models trying to approximate the counterfactual distribution will be biased because of the inadequate counterfactual records.
For example,
in the physical dynamics scenario with continuous intervention space,
historical trajectories satisfy a certain but unknown distribution which corresponds to the historical sampling policy.
It cannot be discretely considered like propensity scores in traditional IPS function~\cite{IPSfunction}, since it may involve an unavoidable variance.
To minimize counterfactual risk when training CWMs,
we choose the propensity score as the historical sampling policy of selected intervention $p(\mathbf{s^0})$ represented by a density function.
Inspired by the doubly robust (DR) estimator,
our estimator is set as below.
\begin{definition}
Let $p(\vs^0)$ denotes the history observation distribution (propensity score) on $t_0$,  and $O(\mathbf{s}|t) = \delta(\mathbf{s} = \mathbf{s^t})$ denotes the observation indicator function (i.e. when $\mathbf{s} = \mathbf{s^t}$, the $O(\mathbf{s^{t}}|t) = 1$), the Doubly Robust prediction of $\vs^{t+1}$ is
\begin{align}
\begin{split}
    \hat{\vs}_{DR}^{t+1}(\vs^t) &=  \frac{O(s = \vs^{0}|t_0)}{p(\vs^0)}(\vs^{t+1}-\vs^{t}- \hat{f}_{\text{trans}}(\vs^{t}))\\
    &\quad\quad+(\vs^{t} + \hat{f}_{\text{trans}}(\vs^{t}))
\end{split}
\end{align}
\end{definition}
\begin{proposition}
Given the propensity score $p(\mathbf{s^0})$, the doubly robust oracle is unbiased against the true trajectory observation,
i.e.,
$\mathbb{E}[\hat{\mathbf{s}}_{DR}^{t+1}] = \mathbf{s^{t+1}}$.
\end{proposition}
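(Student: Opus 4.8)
The plan is to read this as the standard doubly robust identity specialized to the additive combination rule $f_{\text{comb}}(\vs^t, f_{\text{trans}}(\vs^t)) = \vs^t + \Delta\vs^t$ used earlier, and to prove it by (i) identifying the inverse-propensity weight, (ii) computing its expectation, and (iii) invoking linearity of expectation so that the plug-in transition term cancels against part of the residual.

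First I would fix the probability space over which $\E[\cdot]$ is taken. The only randomness relevant to unbiasedness is the historical sampling of the initial state at $t_0$, whose density is the propensity score $p(\vs^0)$; conditioned on a realized trajectory the quantities $\vs^t$, $\vs^{t+1}$ and $\hat f_{\text{trans}}(\vs^t)$ are deterministic. The selection variable $O(s=\vs^0|t_0)=\delta(\vs^0=\vs^{t_0})$ simply records whether the trajectory seeded at the queried initial state was the one actually drawn.

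The crux is the inverse-propensity identity
\[
\E\!\left[\frac{O(s=\vs^0|t_0)}{p(\vs^0)}\right]
=\frac{\E\!\left[\delta(\vs^0=\vs^{t_0})\right]}{p(\vs^0)}
=\frac{p(\vs^0)}{p(\vs^0)}=1,
\]
since integrating the Dirac selection indicator against the sampling density returns exactly that density. With this in hand I would apply linearity to split $\hat{\vs}_{DR}^{t+1}$ into the IPS-weighted residual $\tfrac{O}{p}\big(\vs^{t+1}-\vs^t-\hat f_{\text{trans}}(\vs^t)\big)$ and the deterministic plug-in term $\vs^t+\hat f_{\text{trans}}(\vs^t)$. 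Pulling the (deterministic) residual out of the first expectation and substituting $\E[O/p]=1$ collapses the estimator to $\big(\vs^{t+1}-\vs^t-\hat f_{\text{trans}}(\vs^t)\big)+\big(\vs^t+\hat f_{\text{trans}}(\vs^t)\big)=\vs^{t+1}$.

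The main obstacle is conceptual rather than algebraic: one must state precisely what the expectation ranges over and justify $\E[O/p]=1$ in the continuous-intervention regime, where $O$ is a Dirac rather than a Bernoulli indicator and $p(\vs^0)$ must be the genuine sampling density (the ``oracle'' propensity) for the weighting to be exactly calibrated. Once that identity is secured the remaining algebra is a one-line cancellation, and it is worth stressing in the write-up that $\hat f_{\text{trans}}$ drops out entirely from the final expression, so unbiasedness holds for an arbitrary --- possibly misspecified --- transition estimator. This is precisely the correct-propensity branch of the doubly robust guarantee the proposition asserts.
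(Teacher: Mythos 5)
Your proposal is correct and follows essentially the same route as the paper's proof in Appendix~\ref{appdx:prop1_proof}: both decompose the estimator into the IPS-weighted residual plus the plug-in term, reduce everything to the identity $\mathbb{E}[O/p(\vs^0)]=1$, and conclude by cancellation. If anything, you are more explicit than the paper about what the expectation ranges over and why the Dirac selection indicator integrates against the sampling density to give $1$, which is the step the paper leaves implicit when it factorizes the product of expectations.
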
 
See Appendix~\ref{appdx:prop1_proof} for the proof.
We arrive at the doubly robust objective function for CWMs as:
\begin{equation}
    \mathcal{L}_{DR} = \frac{1}{K}\sum\nolimits_{k=1}^K d(\hat{\vs}_{DR,k}^{t+1}, \vs^{t+1}_k)  + max(0, \gamma - \tilde{\mathcal{H}}),
    \label{eq:unbias_function}
\end{equation}
where $\gamma$ is a free parameter and $\tilde{\mathcal{H}}$ is the energy of negative samples as described in~\Eqref{eq:obj_function}.

\section{Experiment}
\label{sec:experiment}
We compare our Causal World Models (CWMs) against the state-of-the-art conventional world models (WMs)~\cite{Kipf2020Contrastive} in two benchmarks,
CoPhy~\cite{Baradel2020CoPhy} and PHYRE~\cite{bakhtin2019phyre},
to evaluate the quality and the usability of the created \emph{dream} environment,
respectively.
We empirically evaluate our method and show reductions in sample complexities for reinforcement learning tasks and improvements in counterfactual physical reasoning predictions.

\begin{figure*}[t!]
\centering
\includegraphics[width=\textwidth]{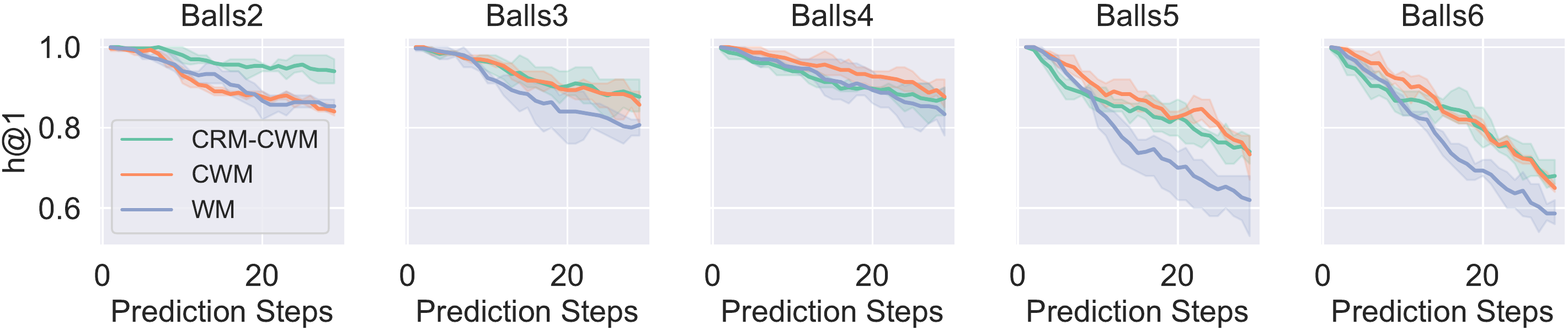}
\label{subfig:h1}
\caption{
(H@1) Ranking results for multi-step prediction in latent space in different environments. Our models (CWM and CRM-CWM) consistently achieve the best result.
}
\label{fig:exp_metrics_h1}
\end{figure*}
\begin{figure*}[t!]
\centering
\includegraphics[width=\textwidth]{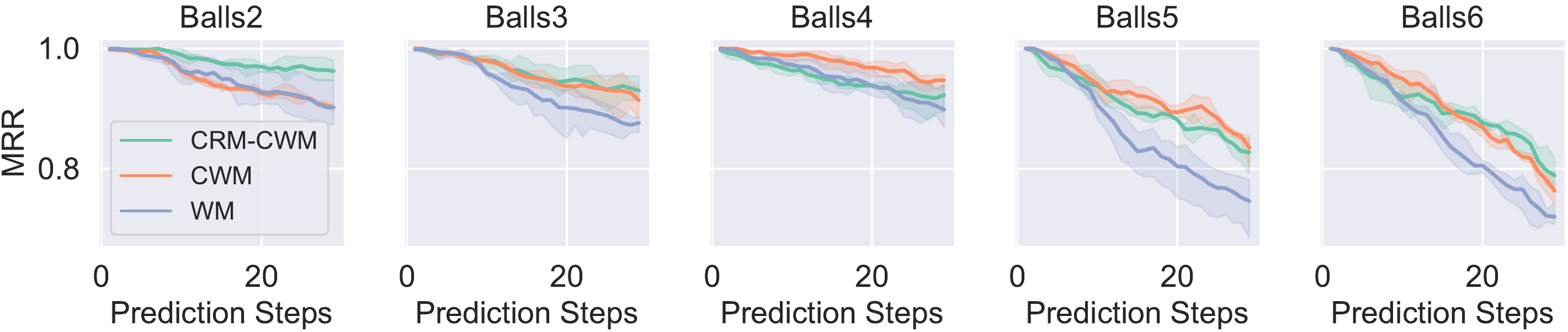}
\label{subfig:mrr}
\caption{
(MRR) Ranking results for multi-step prediction in latent space in different environments. Our models (CWM and CRM-CWM) consistently achieve the best result.
}
\label{fig:exp_metrics_mrr}
\end{figure*}

\subsection{Dream Quality}
\label{subsec:exp_cophy}
\paragraph{Environment Settings}
We conduct our experiments on CoPhy~\cite{Baradel2020CoPhy},
a recently proposed benchmark suite for counterfactual reasoning of physical dynamics from raw visual input,
which contains $K$ balls $(K = 2,3,4,5,6)$ initialized with a random position and velocity.
The world is parameterized by a set of visually unobservable quantities, or confounders,
consisting of ball masses and the friction coefficients.
As described in~\Secref{subsec:causal_pomdp},
the training data of CoPhy is a tuple of two trajectories:
the real-world trajectory $\vo^{0:{T}}$ and the counterfactual trajectory $\bar{\vo}^{0:{T}'}$ after one of the two \textbf{do}-operators: ball displacement or removal.
The model is evaluated by its dream creation quality,
i.e.,
the prediction of the counterfactual outcome $\bar{\vo}^{1:{T}'}$ given real-world trajectory $\vo^{0:{T}}$ and the intervened initial observation $\bar{\vo}^{0}$.
In our experiments, we sampled $700$ tuples of data from the CoPhy benchmark as the training set and $300$ tuples of data as the test set.
All trajectories contains observations for $29$ time steps.
We follow the convention of using two ranking metrics: (1) Hits @ Rank 1 (H@1) and (2) Mean Reciprocal Rank (MRR) to evaluate model performance directly in the latent space~\cite{bordes2013translating,Kipf2020Contrastive}.
The predicted abstract state representation is compared to the encoded ground truth observation and a set of reference states,
which are encoded from random observations sampled from the experience buffer.
We report the average scores over the test set for different prediction steps $T'$.
Details on evaluation metrics can be found in Appendix~\ref{appdx:eval_metrics}.

\paragraph{Model Settings}
We compare with the state-of-the-art world model (WMs) variations for representation learning in environments with compositional structure~\cite{Kipf2020Contrastive}.
The counterfactual prediction (dream creation) of WMs is implemented by simply concatenating $\vo^{t} $ with $\bar{\vo}^{t}$ following the causal graph in~\Figref{subfig:graph_noconf}.
We propose two variations of Causal World Models (CWMs):
the original CWMs described in~\Secref{subsec:cwm_design} and the unbiased-augmented CRM-CWMs using counterfactual risk minimization as described in~\Secref{subsec:crm}.
CRM-CWMs work by estimating a Gaussian probability density function of historical samples on the training set. This density function is then used as the propensity score for~\Eqref{eq:unbias_function}.
All models share the same setting for the object extractor, encoder, and the transition estimator.
Details on architecture and hyperparameters setting can be found in Appendix~\ref{appdx:hyperparam_cophy}.
We choose over different number of slots $\{2,3,4,5,6\}$ (the hyperparameter $K$ as described in~\Secref{subsubsec:obj_enc}) and present the result with the best configuration for each model respectively.
Ablation study on the number of slots can be found in Appendix~\ref{appdx:exp_results}.

\paragraph{Result Analysis}
We firstly show the qualitative example of predicted dream trajectories of WMs and CWMs in the latent space of the 6-ball CoPhy environment in ~\Figref{subfig:traj_wm} and~\Figref{subfig:traj_cwm}, respectively.
Trajectories are projected to two dimensions via PCA and presented with the same scale.
The mean squared errors between the predicted state values and ground truth are presented for each episode.
We can observe that CWMs reliably predict the trajectories in the dream world without direct supervision,
while WMs suffer from a poor counterfactual prediction ability.
Further qualitative results can be found in Appendix~\ref{appdx:exp_results}.
\Figref{fig:exp_metrics_h1} and \Figref{fig:exp_metrics_mrr} show the ranking results of predicting unseen future in the dream world in terms of H@1 and MRR metrics respectively.
Our proposed CWMs and CRM-CWMs consistently give the best result across all training environments,
demonstrating that the estimation of confounders helps counterfactual prediction by modeling the intrinsic environment property.
Also, we discover that the conventional WMs are sensitive to the difficulty of environments:
the hidden confounders of the environment results in higher variance on such methods learning directly from observational data.
On the other hand,
our proposed CRM-CSMs consider the historical sample policy and use counterfactual learning to reduce the sampling bias during the learning process,
thus achieving the best result among the comparison group.

\subsection{Dream Usability}
\label{subsec:exp_phyre}
\begin{figure*}[t!]
\centering
\begin{subfigure}{0.65\textwidth}
\centering
\includegraphics[width=\textwidth]{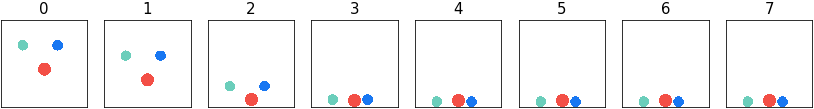}
\caption{A failed trial: placing the red cannot make the green touch the blue.}
\label{subfig:success_phyre}
\includegraphics[width=\textwidth]{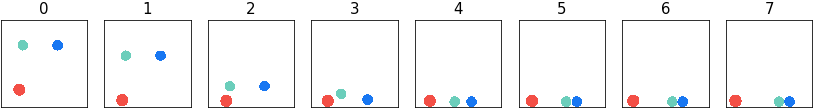}
\caption{A successful trial where the green and blue ball touch each other.}
\label{subfig:fail_phyre}
\end{subfigure}
\begin{subfigure}{0.3\textwidth}
\centering
\includegraphics[width=\textwidth]{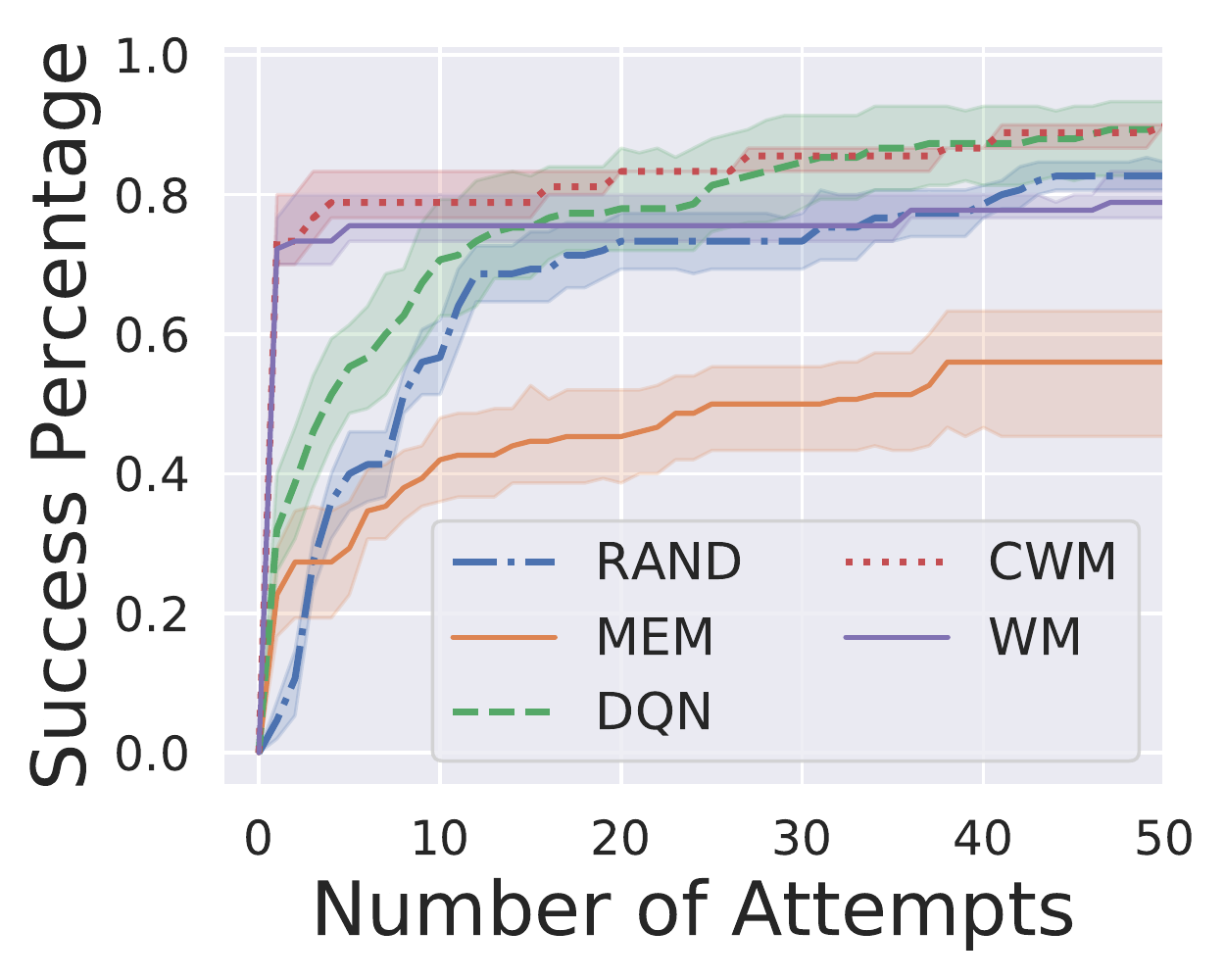}
\caption{}
\label{subfig:rl}
\end{subfigure}
\caption{
Example of the PHYRE environment with (a) a failed trial and (b) a successful trial.
(c) Percentage of solved tasks with respect to the number of attempts per task on PHYRE environment.
}
\label{fig:phyre}
\end{figure*}

\paragraph{Environment Settings}
We evaluate the usability of the created \emph{dream} world by measuring the number of interactions with the environment needed to achieve the goal in PHYRE~\cite{bakhtin2019phyre} environment.
The task is formulated as a physics puzzle in a simulated 2D world containing multiple objects as shown in~\Figref{subfig:fail_phyre}.
Each task has a goal state defined as a (subject, relation, object) triplet identifying a relationship between two objects that the agent needs to achieve before it reaches the horizon $T$,
e.g., make the green ball touch the blue ball as shown in~\Figref{subfig:success_phyre}.
The agent's action,
as well as the intervention,
is defined as placing an object into the world before the environment rollout,
i.e.,
$\mathcal{A} = (a_x, a_y, a_r) \equiv \mathbb{R}^D \times \mathbb{R}^D \times \mathbb{R}^1$,
where $(a_x, a_y)$ and $a_r$ represents the object's location and size respectively.
The agent takes no actions during the environment rollout and receives a binary reward indicating whether the goal is satisfied after the simulation.
All agents are trained to predict whether a specific action (from the default $100k$ action candidates) can solve a specific task
and evaluated by the number of attempts used to solve previously unseen tasks with a given set of action candidates.

\paragraph{Model Settings}
We compare with three baseline agents:
(1) random agent (RAND),
which samples actions uniformly at random from action space at test time;
(2) non-parametric “memorized” agent (MEM),
which computes the fraction of training tasks that a set of actions can solve,
sorts the action set according to the successful rate,
and tries each action in this order at test time;
(3) the Deep Q-network (DQN) agent,
which trains a deep network on the offline collected data to predict the reward for an observation-action pair.
We use the same setting as described in~\Secref{subsec:exp_cophy} for WMs and CWMs and train a separate classifier to predict whether the task is solved given an abstract state value.
At test time, WMs and CWMs produce a score through the classifier based on the predicted counterfactual state value for all valid action candidates of each task,
and attempt to solve the task in the score order.
Details on architecture and hyperparameters setting can be found in Appendix~\ref{appdx:hyperparam_phyre}.

\paragraph{Result Analysis}
\Figref{subfig:rl} presents success-percentage curves on the PHYRE environment averaged over all test tasks and 3 random seeds.
It is clear that CWMs agents perform better than WMs in terms of both the sample complexity and the final average success percentage within the trial budget (50 attempts),
which demonstrate the importance of deconfounding in physical simulation.
Meanwhile,
model-based agents (CWMs and WMs) can significantly reduce the sample complexity when compared to model-free counterparts,
owing to the fact that learning a model of the world can help to reason about what would happen upon a particular change to a previous attempt.
These, in turn, demonstrate the advantage of our model-based approach towards physical reasoning.

\section{Conclusion}
In this paper,
we propose Causal World Models (CWMs) to create a \emph{dream} world able to predict the alternative future not encountered in the real world in a fully unsupervised manner.
By learning an estimator of the latent confounders and optimizing directly within the abstract state space,
CWMs outperform state-of-the-art world models in both the reinforcement learning tasks and physical simulation environments.
To further reduce the inevitable bias of counterfactual dataset,
we also propose a counterfactual risk minimization method for CWMs and demonstrate its effectiveness in learning counterfactual physical dynamics.
In the future,
we would like to explore CWMs' ability in a broader range of applications and more complex environments.

\bibliographystyle{icml2020}
\bibliography{neurips}

\newpage
\onecolumn
\appendix
\setcounter{figure}{0}
\setcounter{definition}{0}
\setcounter{assumption}{0}
\setcounter{theorem}{0}
\setcounter{equation}{0}
\setcounter{page}{1}
\setcounter{footnote}{0}

\section{Simpson's Paradox in the Medical Treatment Scenario}
\label{appdx:pearl_example}
We include the classical example of Simpson's Paradox in the medical treatment scenario from~\citet{10.5555/1642718} to demonstrate the wide existence of confounding.

\begin{figure*}[t!]
\centering
\begin{subfigure}{0.27\textwidth}
\centering
\includegraphics[width=0.4\textwidth]{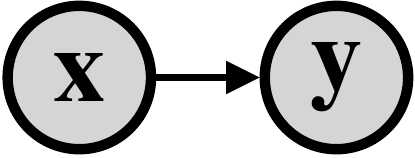}
\caption{}
\label{appdxsubfig:treatment_noconf}
\vskip 0.2in
\includegraphics[width=0.4\textwidth]{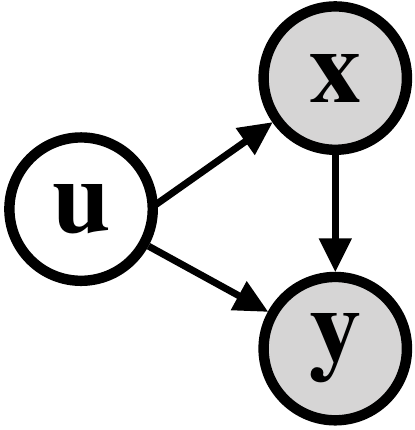}
\caption{}
\label{appdxsubfig:treatment_conf}
\end{subfigure}
\begin{subfigure}{0.7\textwidth}
\centering
\includegraphics[width=0.65\textwidth]{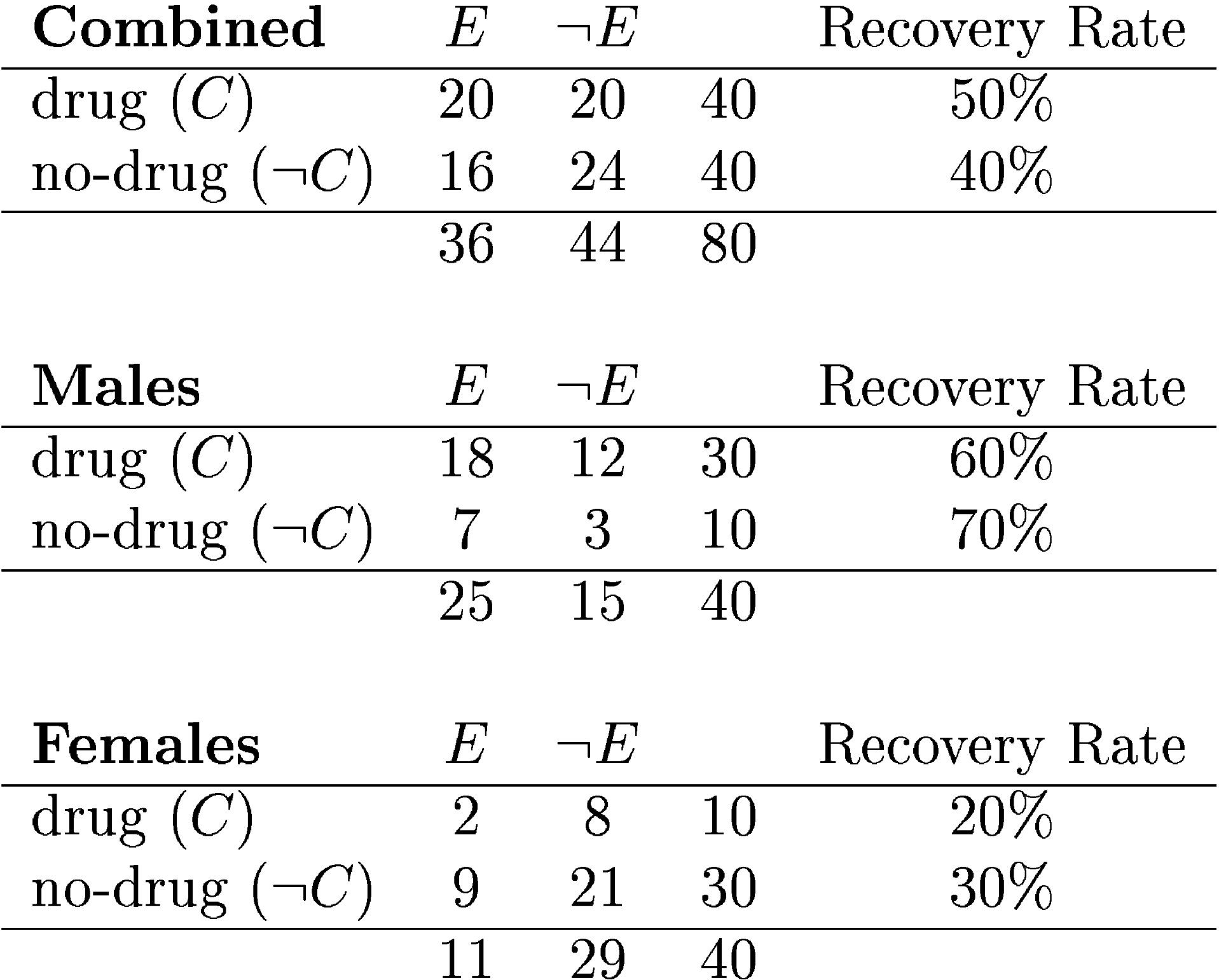}
\caption{Example from~\citet{10.5555/1642718}.}
\label{appdxsubfig:simpsons}
\end{subfigure}
\caption{
Confounding in medical scenarios.
We present the simplified causal graph of the transition model used by
(a) conventional World Models (WMs);
(b) Causal World Models (CWMs), with the consideration of confounding.
$\rvx$ represents the treatment (taking drug or not),
$\rvy$ represents the recovery,
and $\rvu$ represents the gender (male or female).
}
\label{appdxfig:treatment}
\end{figure*}

Consider the example where $\rvy$ is the observational recovery rate and $\rvx$ is taking a drug as the treatment.
As illustrated in~\Figref{appdxsubfig:simpsons},
\citet{10.5555/1642718} shows that the overall recovery rate of all patients increased from $40\%$ to $50\%$ after the treatment,
while the drug appeared to be harmful to both male (recovery rate dropped from $70\%$ to $60\%$) and female patients (recovery rate dropped from $30\%$ to $20\%$) respectively.
Simply using the causal chain of conventional world models (WMs) (shown as in~\Figref{appdxsubfig:treatment_noconf}) will lead to the above Simpson's paradox~\cite{simpson1951interpretation},
which refers to the phenomenon that a trend appears in several different groups of data but disappears or reverses when these groups are combined.
An intuitive explanation for this phenomenon is that females are more vulnerable to the disease and are much more likely to die without the treatment,
resulting in that no treatment looks worse than taking the drug overall.
This intuition leads us to the assumption of the true underlying causal graph of the example shown in~\Figref{appdxsubfig:treatment_conf},
which takes confounding $\rvu$ into account as the gender affects both the treatment and recovery.

\section{Proof of Proposition 1}
\label{appdx:prop1_proof}
\textbf{Proof:} Let $\vs^t$ be the $t$-th latent state of trajectory $k$.

\begin{align*}
    \mathbb{E}(\hat{\vs}_{DR}^{t+1}(\vs^t)|\vs^{0}) &= \mathbb{E}(\frac{O(\vs^{0}|t=0)}{p(\vs^0)}(\vs^{t+1}-\vs^{t}- \hat{f}_{\text{trans}}(\vs^{t}))+(\vs^{t} + \hat{f}_{\text{trans}}(\vs^{t}))|\vs^{0})\\
    &=\mathbb{E}(\frac{O(\vs^{0}|t=0)}{p(\vs^0)})\mathbb{E}(\vs^{t+1}-\vs^{t}- \hat{f}_{\text{trans}}(\vs^{t})|\vs^{0})+\mathbb{E}(\vs^{t} + \hat{f}_{\text{trans}}(\vs^{t})|\vs^{0})\\
    &=\vs^{t+1}
\end{align*}

\section{Evaluation Metrics for the Dream Quality Experiment}
\label{appdx:eval_metrics}
We follow the convention of using ranking metrics to evaluate model performance directly in latent space~\cite{bordes2013translating,Kipf2020Contrastive}.
The predicted abstract state representation is compared to the encoded ground truth observation and a set of reference states,
which are encoded from random observations sampled from the experience buffer.
We measure and report the following two metrics:
\paragraph{Hits @ Rank 1 (H@1)}
H@1 is a binary score measuring whether the rank of predicted abstract state representation equals to 1 after ranking all the reference state representations by distance to the ground truth.
\paragraph{Mean Reciprocal Rank (MRR)}
MRR measures the inverse average rank of all the by $MRR = \frac{1}{N}\sum_n \frac{1}{\text{rank}_n}$, where $\text{rank}_n$ is the rank of $n$-th sample.

We report the average of these scores over the test set for different prediction steps $T'$.

\section{Architecture and Hyperparameter Settings}
\label{appdx:hyperparam}
\subsection{Dream Quality Experiments}
\label{appdx:hyperparam_cophy}
We train all models on an experience buffer obtained by running a random policy on the respective environment.
We choose $700$ episodes with $29$ environment steps each for the training set and $300$ episodes with $29$ steps each for the test set.
All observations for this benchmark have been rendered into the visual space (RGB) at a resolution of $448 \times 448$ pixels with PyBullet~\cite{coumans2019} and we resize the observation to $50 \times 50$ pixels each.

All models are trained for $200$ epochs using the Adam~\cite{kingma2014adam} optimizer with a learning rate of $5e-4$ and a batch size of $25$ (due to the relatively large size of each training tuple).
All experiments were completed on a single NVIDIA GeForce GTX 1080 Ti GPU in under 2 hours.

\subsubsection{WMs}
\label{apdx_subsubsec:wms}
\paragraph{Object Extractor}
We choose the object extractor as the following.
\begin{enumerate}
\item $9 \times 9$ conv. 32 LeakyReLU~\cite{xu2015empirical}. padding 4. stride 1. BatchNorm~\cite{ioffe2015batch}
\item $5 \times 5$ conv. $K$ Sigmoid. padding 0. stride 5. BatchNorm
\end{enumerate}

\paragraph{Object Encoder}
After reshaping/flattening the output of the object extractor, we obtain a $100$-dim vector representation per object.
The object encoder is an MLP using the following architecture.
\begin{enumerate}
\item fully connected. 512 ReLU. 
\item fully connected. 512 ReLU. LayerNorm~\cite{ba2016layer}
\item fully connected. 4.
\end{enumerate}

\paragraph{Transition Estimator}
Both the node and the edge model in the GNN-based transition model are MLPs with the same architecture as the above object encoder module.

\paragraph{Loss Function}
We choose the margin in the hinge loss as $\gamma=1$.
We further multiply the squared Euclidean distance $d(x,y)$ in the loss function with a factor of $0.5/\sigma^2$ with $\sigma=0.5$ to control the spread of the embeddings.

\subsubsection{CWMs}
\label{apdx_subsubsec:cwms}
\paragraph{Confounder Estimator}
We choose the confounder estimator $f_\phi$ as a  GRU~\cite{conf/emnlp/ChoMGBBSB14} with 2 layers and a hidden state of dimension 32.

All other settings are the same as the conventional world models in Appendix~\ref{apdx_subsubsec:wms} and we concatenate the abstract state with the estimated confounder to feed into the transition model.

\subsubsection{CRM-CWMs}
\textbf{Propensity Score Estimation}
We estimate the propensity score as a Gaussian Distribution density function,
i.e.,
$
f(x)=\frac{1}{\sigma \sqrt{2 \pi}} e^{-\frac{(x-\mu)^{2}}{2 \sigma^{2}}},
$
where $x$ are all the observations from the training set.
We use~\Eqref{eq:unbias_function} as our training objective.
All other settings are the same as the CWMs in Appendix~\ref{apdx_subsubsec:cwms}.

\subsection{Dream Usability Experiments}
\label{appdx:hyperparam_phyre}
We train all models on an experience buffer obtained by running a random policy on the respective environment.
We choose $70$ tasks as the training set and $30$ tasks as the test set.
At test time, all agents (except the random agent) rank the same set of $200$ actions on each task and propose the highest-scoring actions for that task as solution attempts.
In the PHYRE environment,
the observation is a $256 \times 256$ image with one of $7$ colors at each pixel,
which encodes properties of each body and the goal.
We map this observation into a $7$-channel image for input to the CNN;
each colored pixel in the image yields a $7$D one-hot vector.

\subsubsection{DQN}
\label{apdx_subsubsec:phyre_dqn}
The DQN agent comprises three parts:
\paragraph{Action Encoder}
The action encoder transforms the 3D action $(a_x, a_y, a_r)$ as described in~\Secref{subsec:exp_phyre} using the following structure:
\begin{enumerate}
\item fully connected. 256 ReLU. 
\item fully connected. 128.
\end{enumerate}
\paragraph{Observation Encoder}
The observation encoder transforms the observation image into a hidden representation using a CNN with the following structure:
\begin{enumerate}
\item $1 \times 1$ conv. 3 ReLU. padding 0. stride 1. BatchNorm
\item $7 \times 7$ conv. $64$ ReLU. padding 3. stride 4. BatchNorm
\item $5 \times 5$ conv. $64$ ReLU. padding 2. stride 2. BatchNorm
\item $5 \times 5$ conv. $64$ ReLU. padding 2. stride 2. BatchNorm
\item $5 \times 5$ conv. $64$ ReLU. padding 2. stride 2. BatchNorm
\item $5 \times 5$ conv. $128$ ReLU. padding 2. stride 2. BatchNorm
\item $5 \times 5$ conv. $128$ ReLU. padding 2. stride 2. BatchNorm
\item $5 \times 5$ conv. $128$ ReLU. padding 2. stride 2. BatchNorm
\end{enumerate}
\paragraph{Fusion Module}
The fusion module combines the action and observation representations and makes a reward prediction.

The DQN agent network is trained end-to-end using Adam optimizer by minimizing the cross-entropy between the soft prediction and the observed reward.
The learning rate for DQN is set as $3e-4$.

\subsubsection{WMs}
\label{apdx_subsubsec:usability_wms}
\paragraph{Object Extractor}
We choose the object extractor as the following:
\begin{enumerate}
\item $1 \times 1$ conv. 3 ReLU. padding 0. stride 1. BatchNorm
\item $7 \times 7$ conv. $64$ ReLU. padding 3. stride 4. BatchNorm
\item $5 \times 5$ conv. $64$ ReLU. padding 2. stride 2. BatchNorm
\item $5 \times 5$ conv. $64$ ReLU. padding 2. stride 2. BatchNorm
\item $5 \times 5$ conv. $64$ ReLU. padding 2. stride 2. BatchNorm
\item $5 \times 5$ conv. $K$ ReLU. padding 2. stride 2. BatchNorm
\end{enumerate}
$K$ is a hyperparameter and we typically set it with heuristic as the number of objects in the task,
i.e., $K=3$ for the task in~\Figref{subfig:success_phyre}.

\paragraph{Object Encoder, Transition Function, and the Loss Function}
All modules are the same as the setting in Appendix~\ref{apdx_subsubsec:cwms}.
\paragraph{State Status Classifier}
We train a separate classifier to predict whether the task is solved given an abstract state value.
The classifier is of the following architecture:
\begin{enumerate}
\item fully connected. 256 ReLU. 
\item fully connected. 128 ReLU.
\item fully connected. 2.
\end{enumerate}
The classifier network is trained by minimizing the cross-entropy between the soft prediction and the label (state solved or not).

All models are trained for $200$ epochs using the Adam optimizer with a learning rate of $5e-4$ and a batch size of $16$ (due to the relatively large size of each training tuple).
All experiments were completed on a single NVIDIA GeForce GTX 1080 Ti GPU in under 6 hours.

\subsubsection{CWMs}
\label{apdx_subsubsec:usability_cwms}
\paragraph{Confounder Estimator}
We choose the confounder estimator as the same setting as described in Appendix~\ref{apdx_subsubsec:cwms}.

All other settings are the same as the conventional world models in Appendix~\ref{apdx_subsubsec:usability_wms} and we concatenate the abstract state with the estimated confounder to feed into the transition model.

\section{Additional Experiment Results}
\label{appdx:exp_results}

\subsection{Qualitative Examples of Dream Quality Experiments}
We present additional \emph{dream} (counterfactual prediction) trajectories for different environments in~\Figref{appdxfig:traj_b2} through~\Figref{appdxfig:traj_b6}.
\begin{figure*}[t!]
\centering
\begin{subfigure}{0.32\textwidth}
\centering
\includegraphics[width=0.9\textwidth]{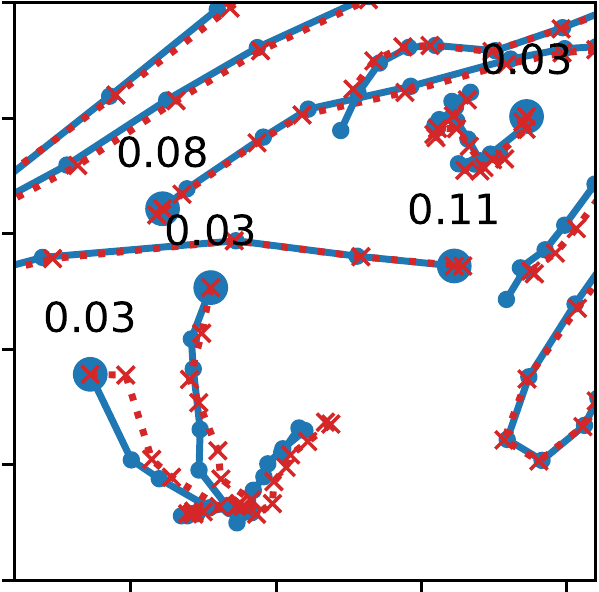}
\caption{WMs: $0.056${\color{lightgrey}\tiny$\pm0.037$}}
\label{appdxsubfig:traj_wm_b2}
\end{subfigure}
\begin{subfigure}{0.32\textwidth}
\centering
\includegraphics[width=0.9\textwidth]{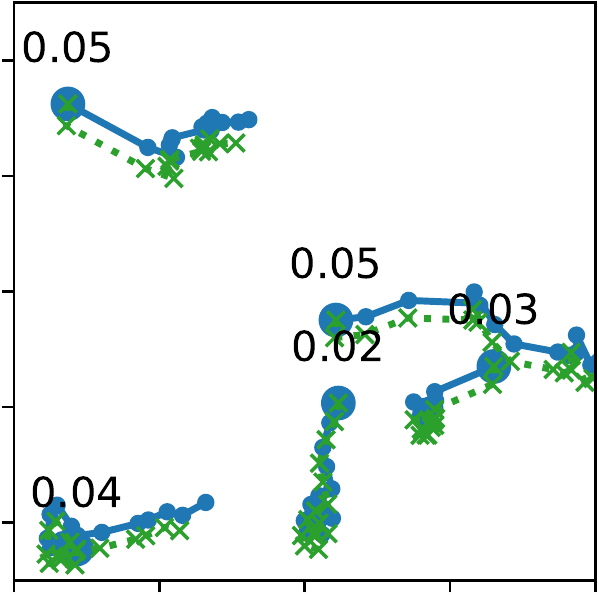}
\caption{CWMs: $0.038${\color{lightgrey}\tiny$\pm0.013$}}
\label{appdxsubfig:traj_cwm_b2}
\end{subfigure}
\begin{subfigure}{0.32\textwidth}
\centering
\includegraphics[width=0.9\textwidth]{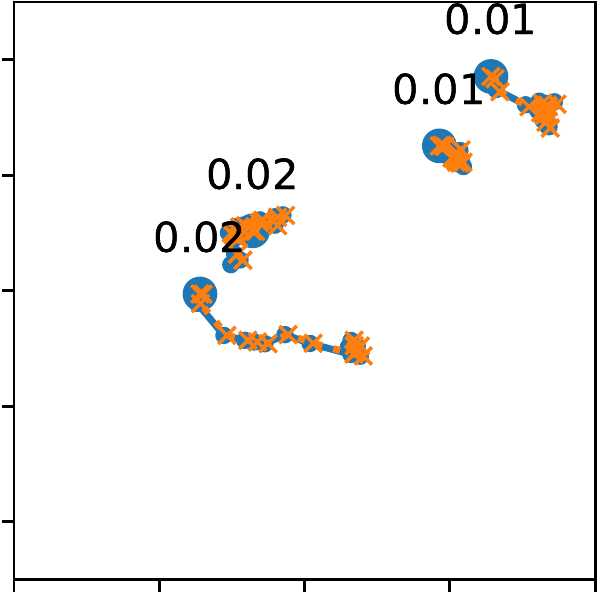}
\caption{CRM-CWMs: $0.015${\color{lightgrey}\tiny$\pm0.006$}}
\label{appdxsubfig:traj_crm_b2}
\end{subfigure}
\caption{
The \emph{dream} (counterfactual prediction) trajectories on (a) WMs (cross with red dotted lines), (b) CWMs (cross with green dotted lines), and (c) CRM-CWMs (cross with orange dotted lines) along with the ground truth (circle with blue solid lines) in the latent space of the 2-ball CoPhy environment.
Trajectories are projected to two dimensions via PCA and presented with the same scale.
The MSE between the predicted state values and the ground truth are presented for each episode.
}
\label{appdxfig:traj_b2}
\end{figure*}

\begin{figure*}[t!]
\centering
\begin{subfigure}{0.32\textwidth}
\centering
\includegraphics[width=0.9\textwidth]{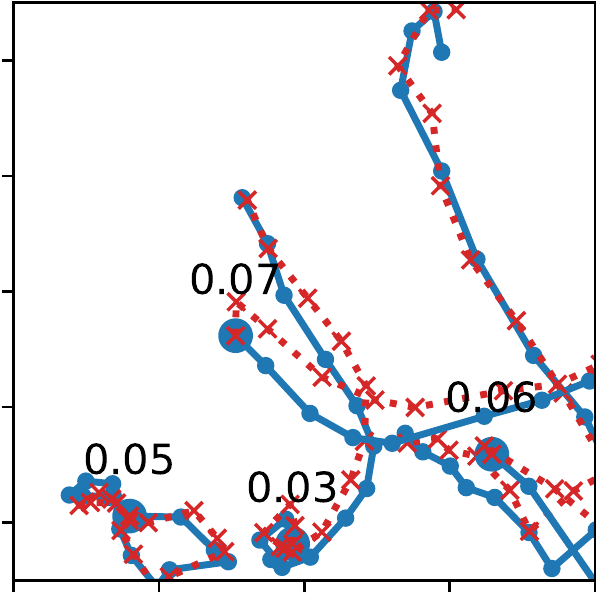}
\caption{WMs: $0.053${\color{lightgrey}\tiny$\pm0.017$}}
\label{appdxsubfig:traj_wm_b3}
\end{subfigure}
\begin{subfigure}{0.32\textwidth}
\centering
\includegraphics[width=0.9\textwidth]{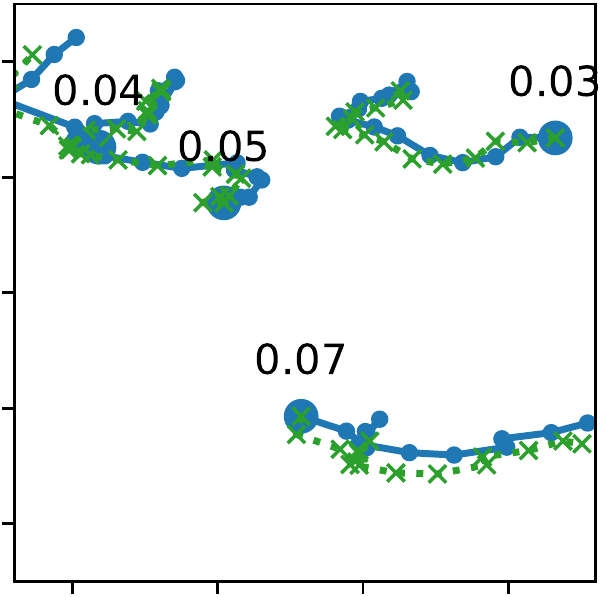}
\caption{CWMs: $0.048${\color{lightgrey}\tiny$\pm0.017$}}
\label{appdxsubfig:traj_cwm_b3}
\end{subfigure}
\begin{subfigure}{0.32\textwidth}
\centering
\includegraphics[width=0.9\textwidth]{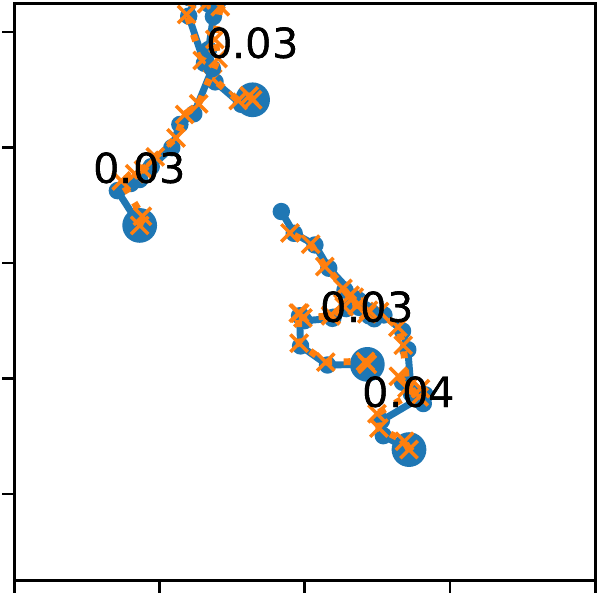}
\caption{CRM-CWMs: $0.033${\color{lightgrey}\tiny$\pm0.005$}}
\label{appdxsubfig:traj_crm_b3}
\end{subfigure}
\caption{
The \emph{dream} (counterfactual prediction) trajectories on (a) WMs (cross with red dotted lines), (b) CWMs (cross with green dotted lines), and (c) CRM-CWMs (cross with orange dotted lines) along with the ground truth (circle with blue solid lines) in the latent space of the 3-ball CoPhy environment.
Trajectories are projected to two dimensions via PCA and presented with the same scale.
The MSE between the predicted state values and the ground truth are presented for each episode.
}
\label{appdxfig:traj_b3}
\end{figure*}

\begin{figure*}[t!]
\centering
\begin{subfigure}{0.32\textwidth}
\centering
\includegraphics[width=0.9\textwidth]{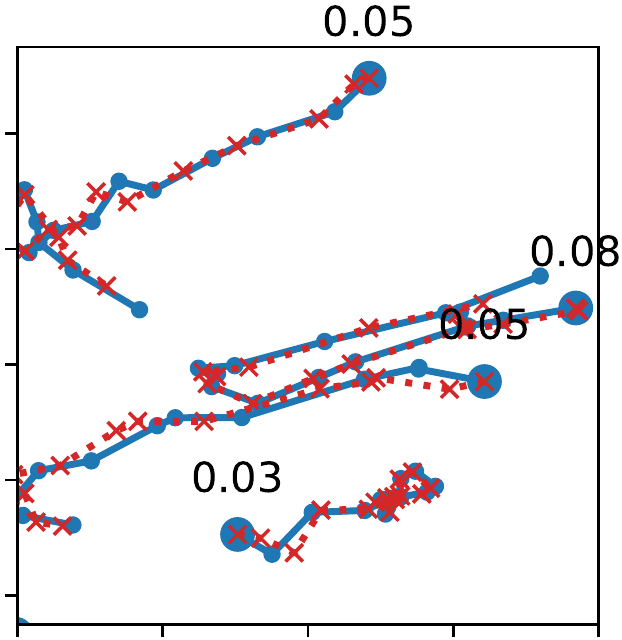}
\caption{WMs: $0.053${\color{lightgrey}\tiny$\pm0.021$}}
\label{appdxsubfig:traj_wm_b4}
\end{subfigure}
\begin{subfigure}{0.32\textwidth}
\centering
\includegraphics[width=0.86\textwidth]{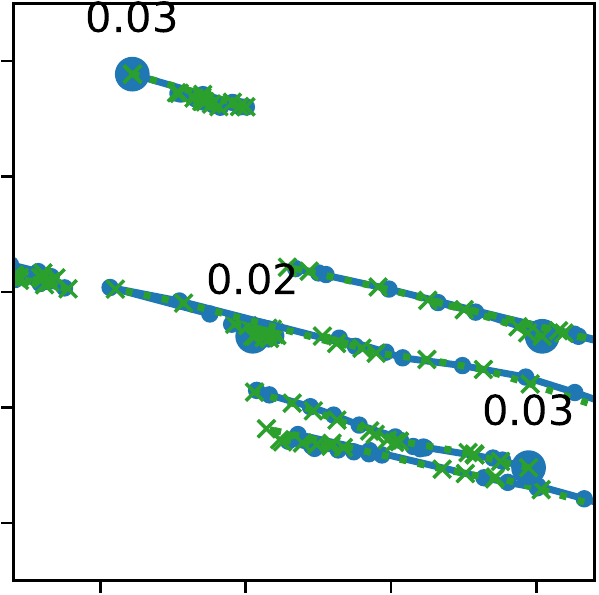}
\caption{CWMs: $0.027${\color{lightgrey}\tiny$\pm0.006$}}
\label{appdxsubfig:traj_cwm_b4}
\end{subfigure}
\begin{subfigure}{0.32\textwidth}
\centering
\includegraphics[width=0.9\textwidth]{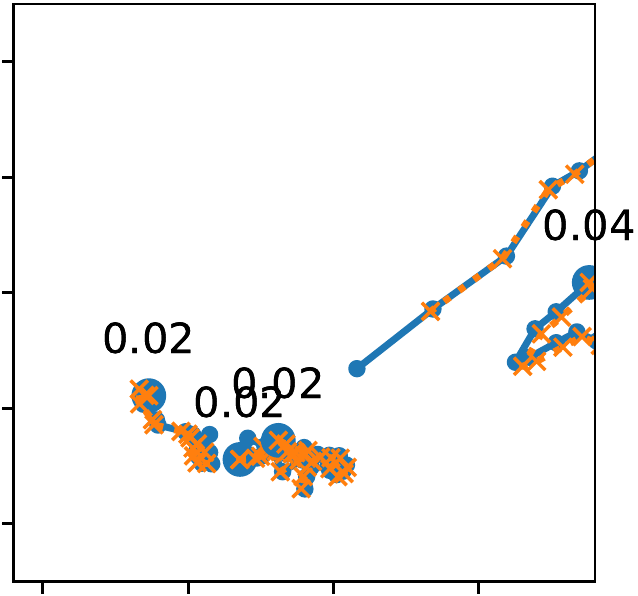}
\caption{CRM-CWMs: $0.025${\color{lightgrey}\tiny$\pm0.01$}}
\label{appdxsubfig:traj_crm_b4}
\end{subfigure}
\caption{
The \emph{dream} (counterfactual prediction) trajectories on (a) WMs (cross with red dotted lines), (b) CWMs (cross with green dotted lines), and (c) CRM-CWMs (cross with orange dotted lines) along with the ground truth (circle with blue solid lines) in the latent space of the 4-ball CoPhy environment.
Trajectories are projected to two dimensions via PCA and presented with the same scale.
The MSE between the predicted state values and the ground truth are presented for each episode.
}
\label{appdxfig:traj_b4}
\end{figure*}

\begin{figure*}[t!]
\centering
\begin{subfigure}{0.32\textwidth}
\centering
\includegraphics[width=0.9\textwidth]{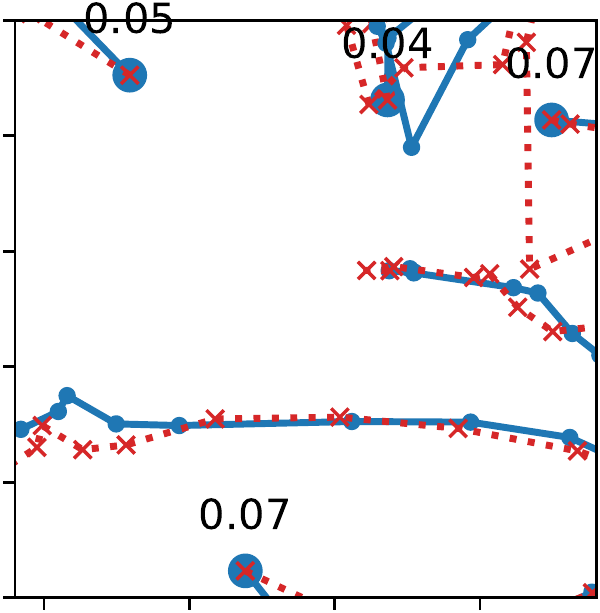}
\caption{WMs: $0.058${\color{lightgrey}\tiny$\pm0.015$}}
\label{appdxsubfig:traj_wm_b5}
\end{subfigure}
\begin{subfigure}{0.32\textwidth}
\centering
\includegraphics[width=0.95\textwidth]{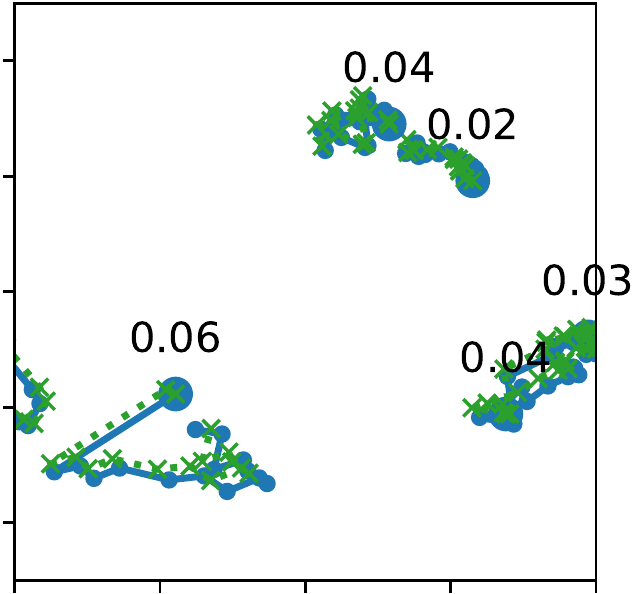}
\caption{CWMs: $0.038${\color{lightgrey}\tiny$\pm0.015$}}
\label{appdxsubfig:traj_cwm_b5}
\end{subfigure}
\begin{subfigure}{0.32\textwidth}
\centering
\includegraphics[width=0.9\textwidth]{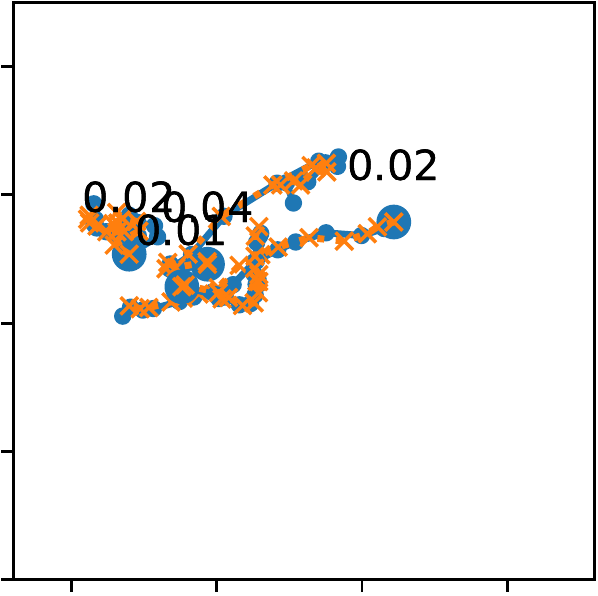}
\caption{CRM-CWMs: $0.023${\color{lightgrey}\tiny$\pm0.013$}}
\label{appdxsubfig:traj_crm_b5}
\end{subfigure}
\caption{
The \emph{dream} (counterfactual prediction) trajectories on (a) WMs (cross with red dotted lines), (b) CWMs (cross with green dotted lines), and (c) CRM-CWMs (cross with orange dotted lines) along with the ground truth (circle with blue solid lines) in the latent space of the 5-ball CoPhy environment.
Trajectories are projected to two dimensions via PCA and presented with the same scale.
The MSE between the predicted state values and the ground truth are presented for each episode.
}
\label{appdxfig:traj_b5}
\end{figure*}

\begin{figure*}[t!]
\centering
\begin{subfigure}{0.32\textwidth}
\centering
\includegraphics[width=0.92\textwidth]{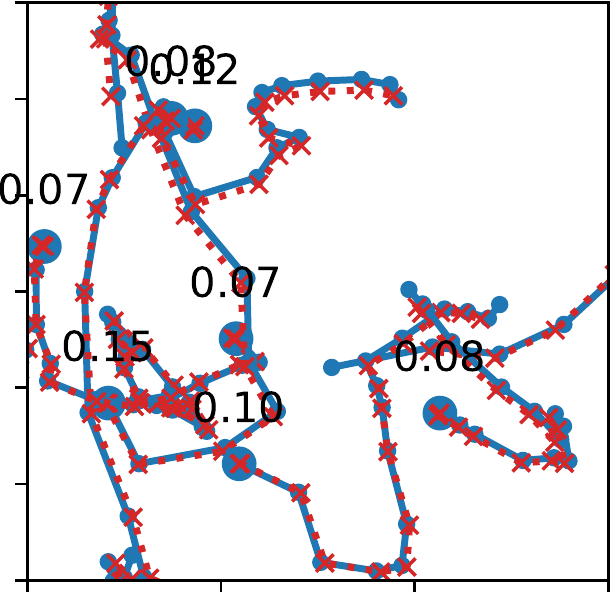}
\caption{WMs: $0.096${\color{lightgrey}\tiny$\pm0.030$}}
\label{appdxsubfig:traj_wm_b6}
\end{subfigure}
\begin{subfigure}{0.32\textwidth}
\centering
\includegraphics[width=0.9\textwidth]{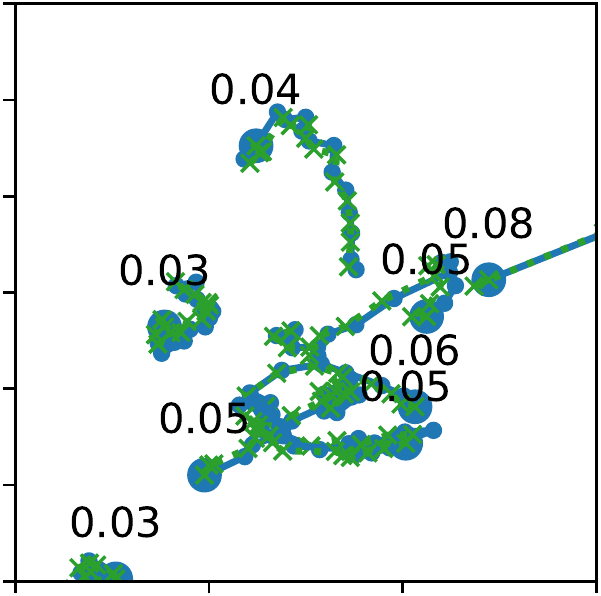}
\caption{CWMs: $0.049${\color{lightgrey}\tiny$\pm0.016$}}
\label{appdxsubfig:traj_cwm_b6}
\end{subfigure}
\begin{subfigure}{0.32\textwidth}
\centering
\includegraphics[width=0.9\textwidth]{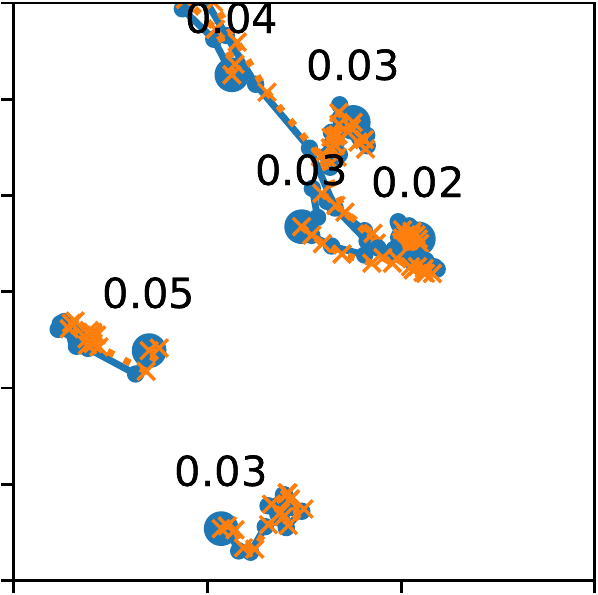}
\caption{CRM-CWMs: $0.033${\color{lightgrey}\tiny$\pm0.010$}}
\label{appdxsubfig:traj_crm_b6}
\end{subfigure}
\caption{
The \emph{dream} (counterfactual prediction) trajectories on (a) WMs (cross with red dotted lines), (b) CWMs (cross with green dotted lines), and (c) CRM-CWMs (cross with orange dotted lines) along with the ground truth (circle with blue solid lines) in the latent space of the 6-ball CoPhy environment.
Trajectories are projected to two dimensions via PCA and presented with the same scale.
The MSE between the predicted state values and the ground truth are presented for each episode.
}
\label{appdxfig:traj_b6}
\end{figure*}

\newpage
\subsection{Ablation Study on the Choice of $K$ in Dream Quality Experiments}
\begin{figure*}[t!]
\centering
\begin{subfigure}{0.5\textwidth}
\centering
\includegraphics[width=\textwidth]{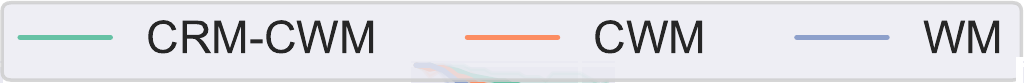}
\label{fig:ballscflegengd}
\end{subfigure}
\begin{subfigure}{\textwidth}
\centering
\includegraphics[width=\textwidth]{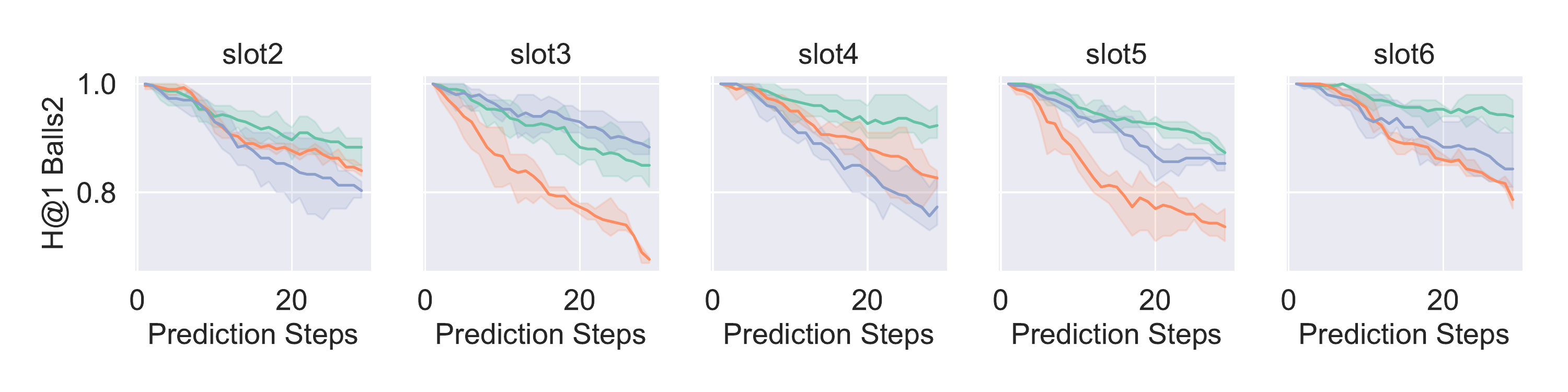}
\label{fig:ballscf2}
\end{subfigure}
\vskip -0.3in
\begin{subfigure}{\textwidth}
\centering
\includegraphics[width=\textwidth]{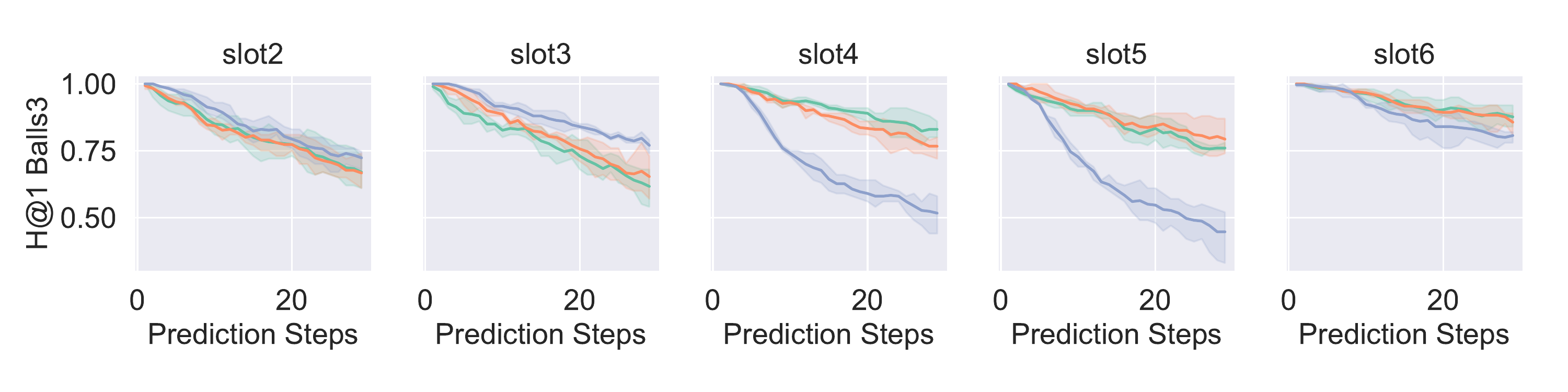}
\label{fig:ballscf4}
\end{subfigure}
\vskip -0.3in
\begin{subfigure}{\textwidth}
\centering
\includegraphics[width=\textwidth]{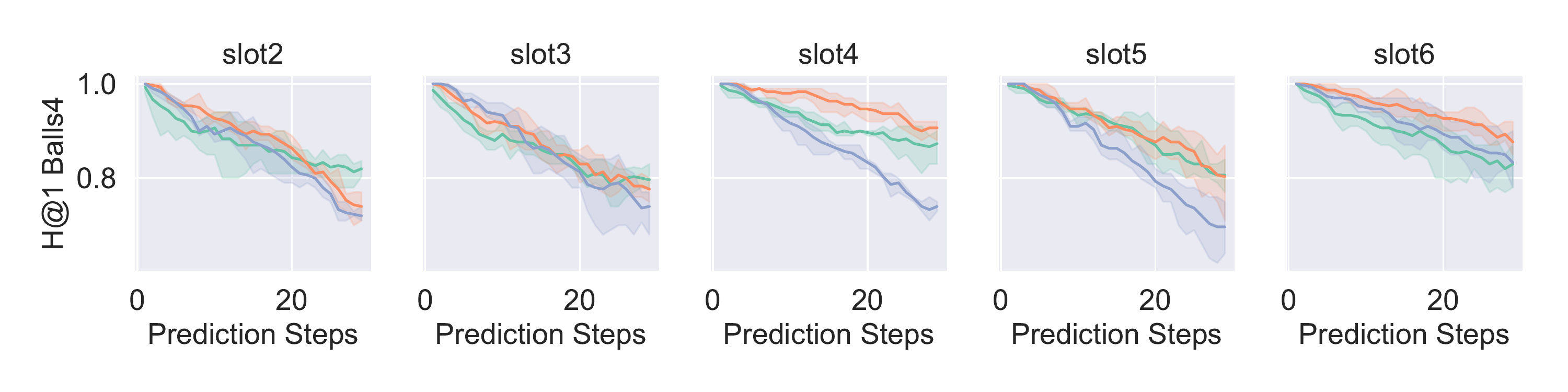}
\label{fig:ballscf6}
\end{subfigure}
\vskip -0.3in
\begin{subfigure}{\textwidth}
\centering
\includegraphics[width=\textwidth]{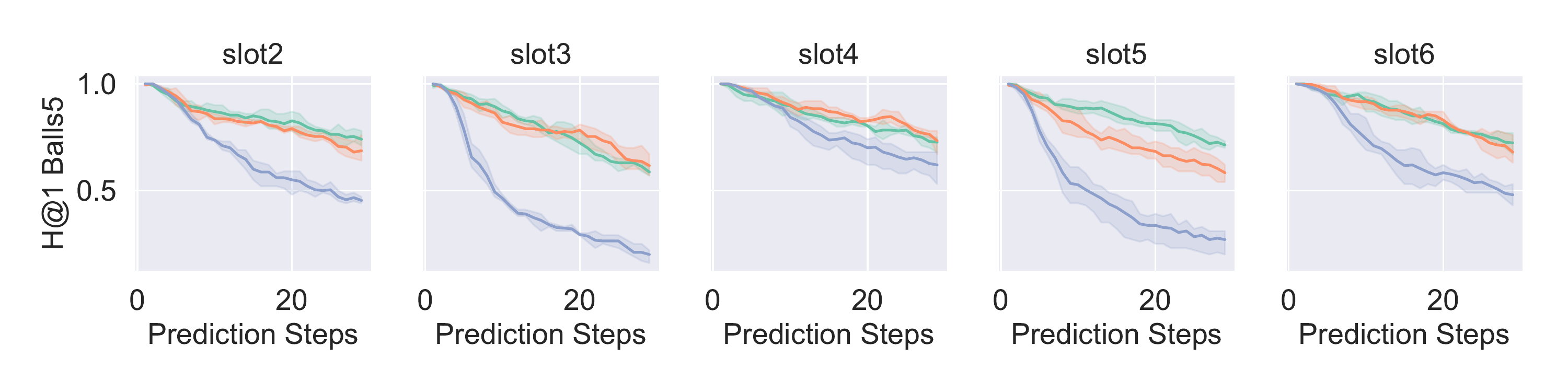}
\label{fig:ballscf6}
\end{subfigure}
\vskip -0.3in
\begin{subfigure}{\textwidth}
\centering
\includegraphics[width=\textwidth]{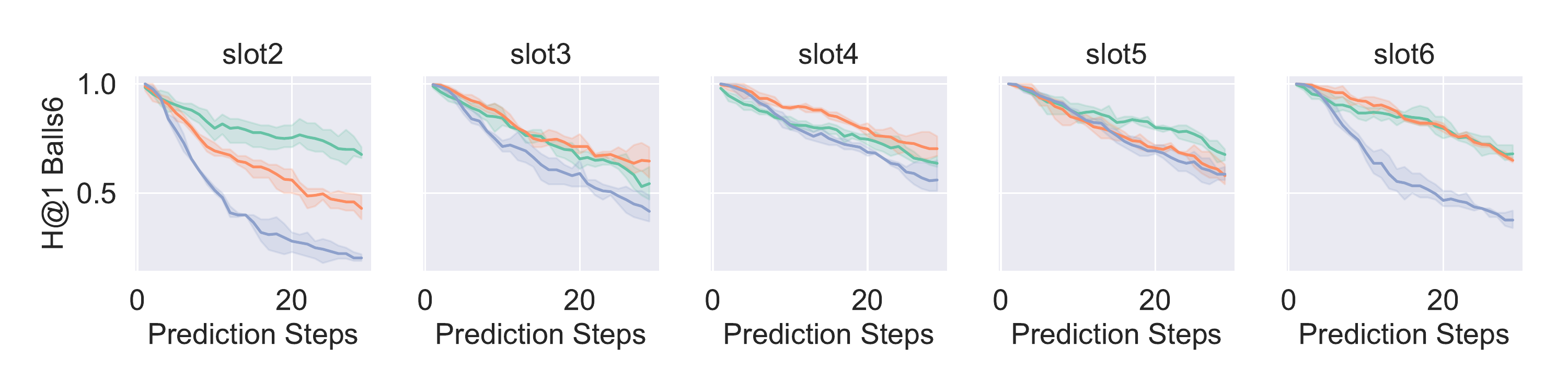}
\label{fig:ballscf6}
\end{subfigure}
\vskip -0.3in
\caption{(H@1)
Ranking results for multi-step prediction in latent space with different slot (K) variations.
}
\label{appdxfig:exp_h1}
\end{figure*}

\begin{figure*}[t!]
\centering
\begin{subfigure}{0.5\textwidth}
\centering
\includegraphics[width=\textwidth]{figure/separate/legend.pdf}
\label{fig:ballscflegengd}
\end{subfigure}
\begin{subfigure}{\textwidth}
\centering
\includegraphics[width=\textwidth]{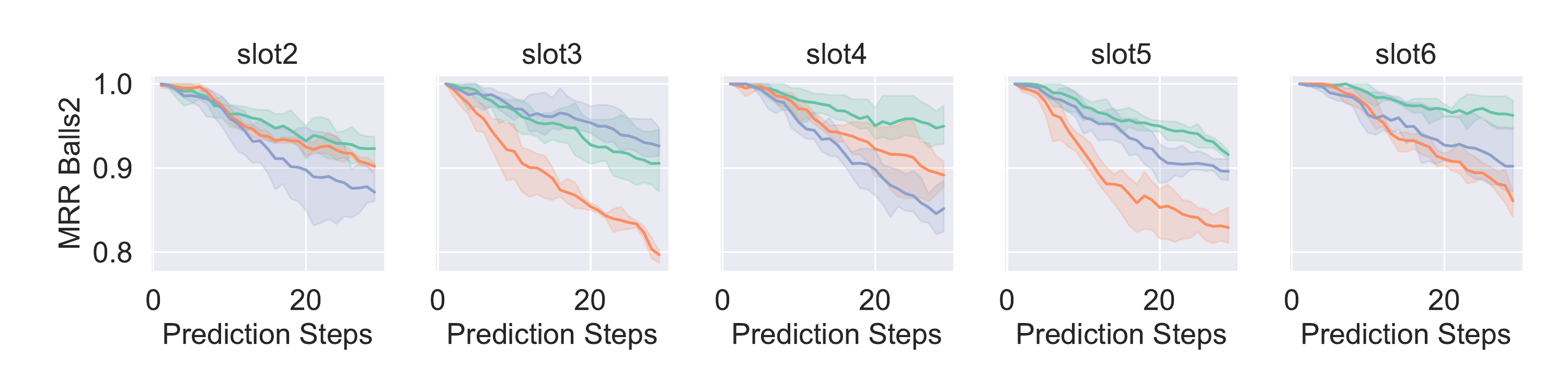}
\label{fig:ballscf2}
\end{subfigure}
\vskip -0.3in
\begin{subfigure}{\textwidth}
\centering
\includegraphics[width=\textwidth]{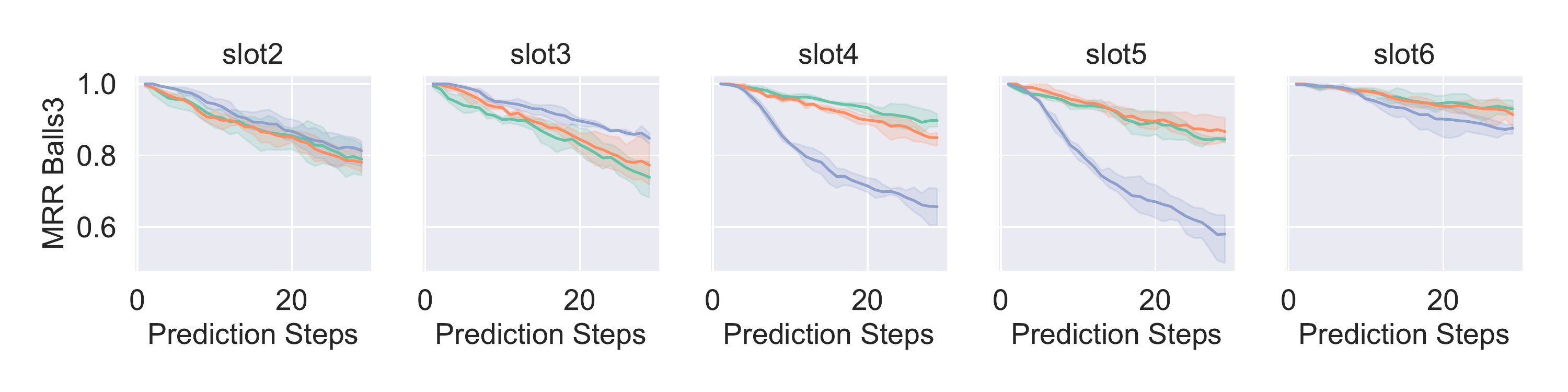}
\label{fig:ballscf4}
\end{subfigure}
\vskip -0.3in
\begin{subfigure}{\textwidth}
\centering
\includegraphics[width=\textwidth]{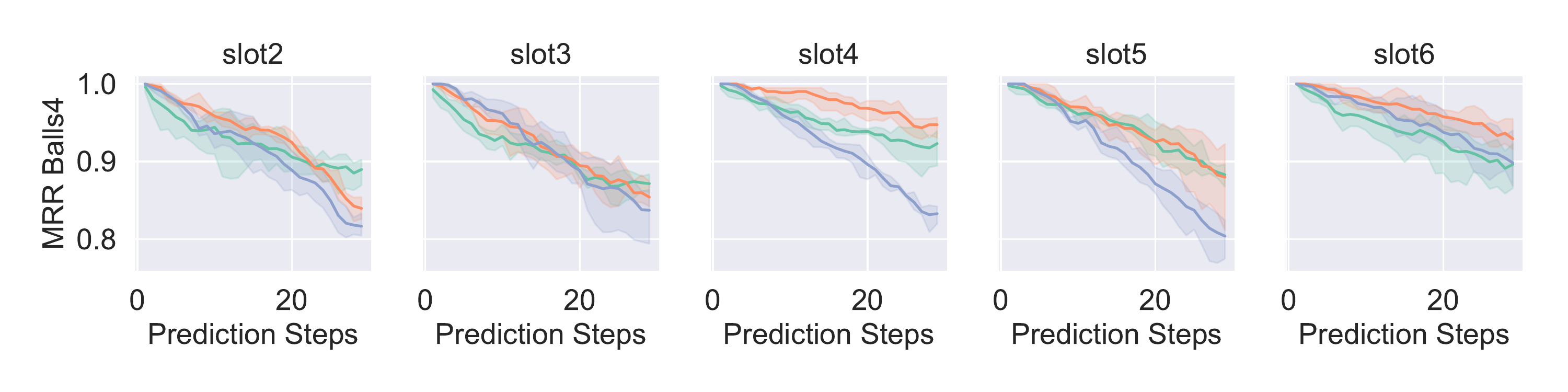}
\label{fig:ballscf6}
\end{subfigure}
\vskip -0.3in
\begin{subfigure}{\textwidth}
\centering
\includegraphics[width=\textwidth]{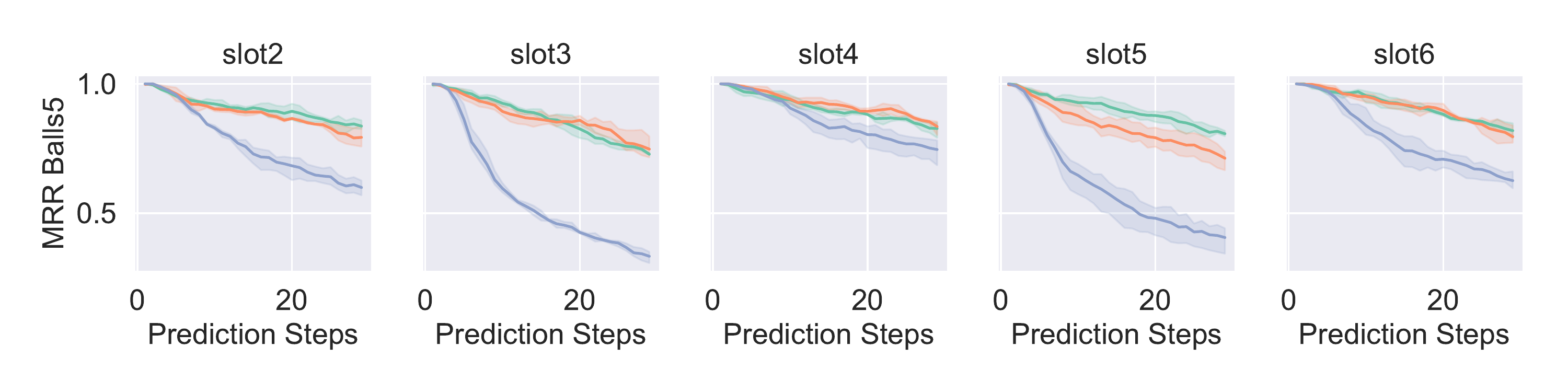}
\label{fig:ballscf6}
\end{subfigure}
\vskip -0.3in
\begin{subfigure}{\textwidth}
\centering
\includegraphics[width=\textwidth]{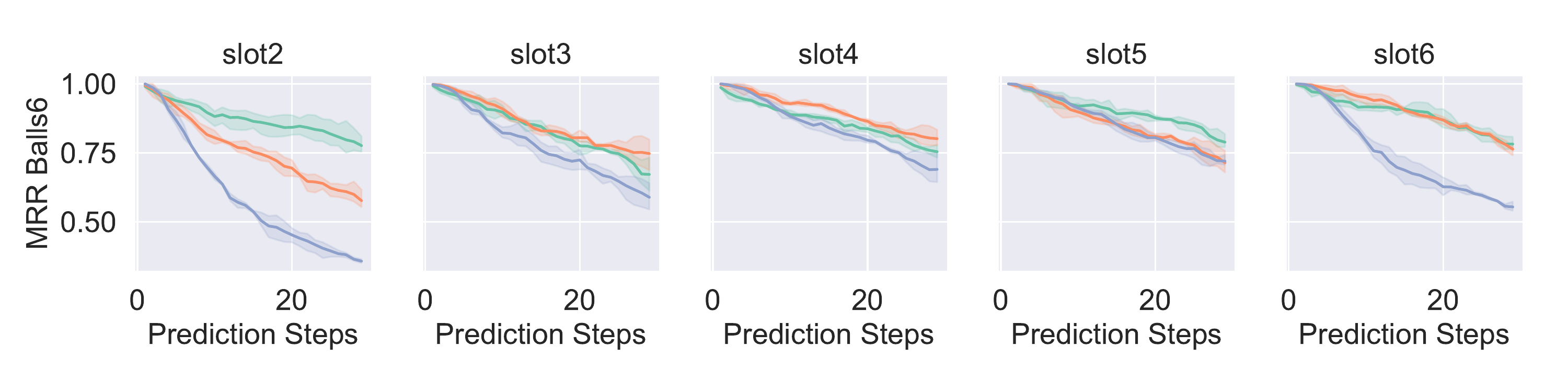}
\label{fig:ballscf6}
\end{subfigure}
\vskip -0.3in
\caption{(MRR)
Ranking results for multi-step prediction in latent space with different slot (K) variations. Our model (Causal SWM) consistently achieves the best result.
}
\label{appdxfig:exp_mrr}
\end{figure*}
From experimental results, we also discover some insights about parameter tuning. 
With the increasing number of balls in the environment,
all methods have been struggling to make good long term predictions. 
Although results in~\Figref{fig:exp_metrics_h1} and~\Figref{fig:exp_metrics_mrr} of the main paper are presented using the best configuration of number of object slots $K$ for each model,
results in~\Figref{appdxfig:exp_h1} and~\Figref{appdxfig:exp_mrr} show that our proposed CWMs and CRM-CWMs benefit from using a larger number of slots in all environments.
This finding suggests that using a large number of slots $K$ might be a good choice when dealing with a new environment without any prior information.

\end{document}